\theoremstyle{plain}
\newtheorem{theorem}{Theorem}[section]
\newtheorem{proposition}[theorem]{Proposition}
\newtheorem{lemma}[theorem]{Lemma}
\theoremstyle{definition}
\newtheorem{assumption}[theorem]{Assumption}
\theoremstyle{remark}
\newcommand{\R}{\mathbb{R}}
\newcommand{\E}{\mathbb{E}}
\newcommand{\Prob}{\mathbb{P}}
\newcommand{\C}{\mathcal{C}}
\newcommand{\D}{\mathcal{D}}
\newcommand{\G}{\mathcal{G}}
\newcommand{\cH}{\mathcal{H}}
\newcommand{\N}{\mathcal{N}}
\newcommand{\cO}{\mathcal{O}}
\newcommand{\X}{\mathcal{X}}
\newcommand{\Y}{\mathcal{Y}}
\newcommand{\Z}{\mathcal{Z}}
\newcommand{\id}{\mathrm{d}}
\newcommand{\inv}{^{-1}}
\newcommand{\T}{^\top}
\newcommand{\pg}{P_{G}}
\newcommand{\norm}[2][]{\left\Vert#2\right\Vert_{#1}}
\newcommand{\sw}{S_{w}}
\newcommand{\bone}{\bm{1}_{n}}
\newcommand{\dw}{\Delta w}
\newcommand{\dww}[1]{\Delta W_{#1}}
\newcommand{\duu}[1]{\Delta U_{#1}}
\newcommand{\fw}[2][G]{f_{#2}\left(#1\right)}
\newcommand{\mar}[3][M]{#1\left(#2,#3\right)}
\newcommand{\ml}[3][f_w]{#2_{\D,#3}\left(#1\right)}
\newcommand{\eml}[3][f_w]{\hat{#2}_{S,#3}\left(#1\right)}
\newcommand{\kl}[2][P]{KL\left(#2\Vert #1\right)}
\journal{Neural Networks}
\begin{document}

\begin{frontmatter}


\author{Tan Sun, Junhong Lin\corref{cor1}}
\cortext[cor1]{Correspondence to: Junhong Lin <junhong@zju.edu.cn>.}
\affiliation{organization={Center for Data Science, Zhejiang University},
            city={Hangzhou},
            country={P.R. China}}

\title{PAC-Bayesian Adversarially Robust Generalization Bounds for Graph Neural Network} 


%

\begin{abstract}
Graph neural networks (GNNs) have gained popularity for various graph-related tasks. However, similar to deep neural networks, GNNs are also vulnerable  to adversarial attacks.  Empirical studies have shown that adversarially robust generalization has a pivotal role in establishing effective defense algorithms against adversarial attacks. In this paper, we contribute by providing adversarially robust generalization bounds for two kinds of popular GNNs, graph convolutional network (GCN) and message passing graph neural network, using  the  PAC-Bayesian framework.
Our result reveals that spectral norm of the diffusion matrix on the graph and spectral norm of the weights  as well as the perturbation factor govern the robust generalization bounds of both models.
Our bounds are nontrivial generalizations  of the results developed in \citep{liao_pac-bayesian_2020} from the standard setting to adversarial setting while avoiding exponential dependence of the maximum node degree.
As corollaries, we derive better PAC-Bayesian generalization bounds for GCN in the standard setting, which improve the bounds in \citep{liao_pac-bayesian_2020}  by avoiding exponential dependence on the maximum node degree. 
\end{abstract}


%

\begin{keyword}
adversarially robust generalization bounds \sep PAC-Bayesian analysis \sep graph neural networks


\end{keyword}

\end{frontmatter}



\section{Introduction} \label{sc:introduction}

Graph neural network (GNN) \citep{scarselli_graph_2009} has developed rapidly in recent years, due to the excellent performance in a variety of applications with the graph-structure data. Generally, GNNs are neural networks that combine local aggregation of node features and non-linear transformations to predict the node-level labels or graph-level labels, in semi-supervised learning \citep{kipf_semi-supervised_2017} and supervised learning \citep{pan_joint_2015,pan_task_2016,zhang_end_2018, ying_2018_hierarchical}, respectively. With the rapid development of GNN, there is an increasing interest in understanding the generalization \citep{scarselli_vapnik_2018,verma_stability_2019,du_graph_2019,garg_generalization_2020,liao_pac-bayesian_2020,esser_learning_2021,ju_generalization_2023} and other theoretical properties of GNN.

\citet{szegedy_intriguing_2014} and \citet{goodfellow_explaining_2015} reveal that deep neural network (DNN) can be easily fooled by adversarial input samples and that adversarial training can improve the generalization of DNN.
GNN also has a similar neural network structure and can be vulnerable to adversarial attacks on graph data. Previous work focuses on how to attack on GNN  \citep{dai_adversarial_2018, zugner_adversarial_2018, bojchevski_adversarial_2019, zugner_adversarial_2019, liu_unified_2019} and how to defend GNN against adversarial attacks \citep{zhang_gnnguard_2020, wu_adversarial_2019, entezari_all_2020, zhu_robust_2019, tang_transferring_2020}, but the theoretical analysis is relatively limited. 


A recent line of work discusses the generalization of GNN in the standard (non-adversarial) setting, and derive certain bounds on complexity notations including  Vapnik–Chervonenkis (VC) dimension and Rademacher complexity that are used to characterize the generalization error, i.e., the disparity between the performance on training and test datasets of a model. Specifically, \citet{scarselli_vapnik_2018} derive an upper bound of order $\cO(p^4n^2)$ for the VC dimension of GNNs with sigmoid activations, where $p$ is the number of parameters and $n$ is the number of nodes.
\citet{verma_stability_2019} derive a generalization bound for single-layer graph convolutional neural network (GCN) by analyzing the algorithmic stability of stochastic gradient descent, where the bound depends on the number of algorithm iterations and the largest absolute eigenvalue of the graph convolution filter. 
\citet{du_graph_2019} consider an infinitely wide multi-layer GNN trained by gradient descent based on neural tangent kernels \citep{jacot_neural_2018}, and derive a generalization bound for this particular model. 
These two results are constrained to specific settings and do not apply to other scenarios.
\citet{garg_generalization_2020} present the pioneering data-dependent generalization bounds for GNNs employing Rademacher complexity, with a scaling factor of $\cO\left(hd^{l}\sqrt{l\log\left(hd\right)}\right)$. Here, $d$ is the maximum degree of the graph, while $h$ and $l$ denote the width and depth of the network, respectively.

In light of the earlier analysis, obtaining generalization bounds for GNNs using complexity notations could be challenging, even in the standard setting. Consequently, some studies have shifted their focus to the PAC-Bayesian framework. Based on the PAC-Bayesian framework, \citet{liao_pac-bayesian_2020} provide generalization bounds for GCN and message passing graph neural network (MPGNN), where the bounds scale as $\cO\left(ld^{\frac{l-1}{2}}\sqrt{h\log(lh)}\right)$ and  $\cO\left(ld^{l-2}\sqrt{h\log(lh)}\right)$ respectively. 
Recently, \citet{ju_generalization_2023} derive generalization bounds of order $\cO\left(l\norm[2]{\pg}^{l-1}\sqrt{h}\right)$ for MPGNN. The key idea of their analysis is to measure the stability of GNN against noise perturbations using Hessians, and requires that the nonlinear activations and the loss functions are twice-differentiable, Lipschitz-continuous and their first-order 
and second-order derivatives are both Lipschitz-continuous, which thus does not apply to the commonly used ReLU activation. 
Here,  $\norm[2]{\pg}$ is the  spectral norm of the GNN's feature diffusion matrix $\pg$ on the graph $G$, and varies between different models. Particularly, $\norm[2]{\pg}$ is less than 1 for GCN
and  can be less than $d$ for MPGNN.



In seeking to grasp the constrained robust generalization abilities of GNNs in the adversarial setting,
in this paper, we provide adversarially robust generalization bounds for two kinds of popular GNNs, GCN \citep{kipf_semi-supervised_2017} and MPGNN \citep{dai_discriminative_2016, jin_learning_2019}, using PAC-Bayesian framework. 
Our result reveals that spectral norm of the diffusion matrix on the graph, spectral and Frobenius norm of the weights, as well as the perturbation factor govern the robust generalization bounds of both models. 
Our bounds for GCN complement the findings in the standard setting for DNN 
in \citep{neyshabur_pac-bayesian_2018}, as well as the results for GCN in \citep{liao_pac-bayesian_2020}.  All the bounds of these studies have the same spectrally-normalized term of the weights and the dependencies on both depth and width:
$$ l\sqrt{h\log (lh)}\prod_{i=1}^{l}\norm[2]{W_i}^2\sum_{i=1}^{l}\frac{\norm[F]{W_i}^2}{\norm[2]{W_i}^2}. $$
To our knowledge, our derived PAC-Bayesian adversarially robust generalization bounds may be the first kind for GNN in the adversarial robustness settings.  Even in the standard setting, our bounds improve the bounds in \citep{liao_pac-bayesian_2020} from $\cO\left(ld^{\frac{l-1}{2}}\sqrt{h\log(lh)}\right)$  to $\cO(l\sqrt{h \log(lh)})$
for GCN, and  $\cO\left(ld^{l-2}\sqrt{h\log(lh)}\right)$ to
$\cO(l \norm[2]{\pg}^{l-2} \sqrt{h \log(lh)})$ 
for MPGNN, avoiding the exponential dependence on the maximum degree $d$ of the graph. Meanwhile, our results remove the extra first-order and second-order Lipschitz-continuous assumptions in \citep{ju_generalization_2023} for MPGNN,
exhibiting comparable tightness on the dependency on the depth, width and maximum degree.
In the context of adversarial robustness, the derived generalization bounds incorporate an additional factor $\epsilon$ that represents the intensity of the perturbation and is in general unavoidable.
The derived bounds in the robust setting indicate that one could achieve roughly the same generalization performance as in the non-adversarial attack setting, provided that the $l_2$-norm attack level on the graph node, denoted as $\epsilon$, is less than the norm on the node. Additionally, according to our theory, the generalization error bounds do not have a significant effect on the graph edge attack. Therefore, one could overlook the attack on the graph edge and instead focus on the optimization error caused by the graph edge attack.

To achieve the results, we follow the PAC-Bayesian framework \citep{mcallester_simplified_2003, neyshabur_pac-bayesian_2018} with necessary modification in the adversarial setting, and relate the robust generalization error on a prior distribution of models to that on a single model by adding random perturbation (that results in a small perturbed robust margin loss) to the single model parameters.  
We then proceed with the proof by constructing suitable prior distribution and random perturbations, combining with novel concise techniques, which are related to the Frobenius norm 
(instead of the spectral norm or the maximum rows' vector $2$-norm used in the previous literature)  of the node representations and related inequalities, to well estimate the sensitivity of the robust margin loss concerning the model parameters and to meet the challenging of avoiding the exponential dependence of the maximum degree.


\textbf{Notations.} We define some notations for the following analysis. Let $[k]$ be the set of first $k$ positive integers, i.e., $[k]=\{1,\dots,k\}$, $\bone\in\R^{1\times n}$ be an all-one vector,  and $\mathbbm{1}\{\cdot\}$ be the indicator function. For a vector $x$, $\norm[p]{x}$ denotes the vector $p$-norm. For a matrix $A$, $\norm[p]{A}$ denotes the operator norm induced by the vector $p$-norm, and $\norm[F]{A}$ denotes the Frobenius norm.

The rest of the paper are organized as follows. In \cref{sc:related-work}, we briefly review some other related works. In \cref{sc:preliminaries}, we present the problem setup and preliminaries. We outline the main results in \cref{sc:main_results}. Finally, we discuss the limitations and some open problems in \cref{sc:discussion}.

\section{Related Work} \label{sc:related-work}

\textbf{Generalization Bounds.} 
Classical results rely on complexity notions of neural networks to regulate generalization bounds, employing measures such as VC-dimension as seen in \citep{anthony_neural_2009}, or Rademacher complexity as discussed in \citep{bartlett_rademacher_2001}.
Norm-margin-based generalization bounds are developed using Rademacher complexity for fully-connected neural network in \citep{neyshabur_norm_2015, bartlett_spectrally-normalized_2017, golowich_size-independent_2018}.
\citet{neyshabur_norm_2015} establish a generalization bound with an exponential dependence $2^l$ on the network depth $l$ using Frobenius norms of parameters.
\citet{bartlett_spectrally-normalized_2017} showed a margin based generalization bound that depends on spectral norm
and $1$-norm of the layers of the networks, reducing the dependence on depth. \citet{golowich_size-independent_2018} 
provide bounds on their Rademacher complexity assuming norm constraints on the parameter matrix of each layer, refining the dependency on depth to $\sqrt{l}$.
\citet{neyshabur_pac-bayesian_2018} establish a PAC-Bayesian framework to bound the generalization gap  for fully-connected neural network, where the bound depends on the product of the spectral norm of the weights in
each layer as well as the Frobenius norm of the weights.
As mentioned above, the generalization bounds on GNN are derived using different tools such as VC-dimension \citep{scarselli_vapnik_2018}, stability analysis \citep{verma_stability_2019}, Rademacher complexity \citep{garg_generalization_2020} and PAC-Bayesian analysis \citep{liao_pac-bayesian_2020, ju_generalization_2023}. 

\textbf{Adversarial Attack and Defense.} \citet{zugner_adversarial_2018} focus on the adversarial attacks against GCN and provide an algorithm to generate adversarial perturbations targeting on the node features and the graph structures. \citet{dai_adversarial_2018} propose a reinforcement learning based attack method for adversarial attacks against GNN by modifying the structures of the graph. \citet{zugner_adversarial_2019} develop a general attack on the training procedure of GCN using meta-gradients. \citet{bojchevski_adversarial_2019} propose a principled strategy for adversarial attacks on unsupervised node embeddings. There are also some researches focusing on the defense methods. \citet{wu_adversarial_2019} provide a defense method by pre-processing the adjacency matrix of the graph. The methods in \citep{zhu_robust_2019, entezari_all_2020} defend against attacks by absorbing or dropping the noise information.
\citet{tang_transferring_2020} use transfer learning to improve the robustness of GNN. \citet{zhang_gnnguard_2020} develop a general algorithm to defend against a variety of training-time attacks that perturb the graph structure. 

\textbf{Adversarial Generalization in DNN.} A line of work presents the adversarially robust generalization bounds on DNNs using Rademacher complexity 
\citep{khim_adversarial_2018, yin_rademacher_2019, awasthi_adversarial_2020, gao_theoretical_2021, xiao_adversarial_2022}. 
\citet{khim_adversarial_2018} derive robust generalization bounds for linear and neural network via Rademacher complexity of a transformed function.
\citet{yin_rademacher_2019} established generalization bounds for both linear models and one-hidden-layer neural networks based on the
surrogate loss introduced in \citep{raghunathan2018certified} considering
vector-$\infty$-norm-additive perturbation.
\citet{awasthi_adversarial_2020} give upper
and lower bounds for Rademacher complexity of linear hypotheses with
vector-$r$-norm-additive perturbation for $r \geq 1$, and also provide upper bounds on Rademacher complexity of one hidden layer neural networks, directly applying to the original network.  \citet{gao_theoretical_2021} and \citet{farnia2018generalizable} show robust generalization bounds for DNNs, considering specific attacks. \citet{xiao_adversarial_2022} and \citet{mustafa2022generalization} employ two different techniques, respectively, to calculate the covering number that can be used to estimate the Rademacher complexity, for adversarial function classes of DNNs. Further,
\citet{xiao_pac-bayesian_nodate} present a robust generalization bound on DNN using PAC-Bayesian framework.

\section{Preliminaries} \label{sc:preliminaries}

In this section, we introduce the problem setup for multi-class graph classification problem with GCN and MPGNN, and preliminary results on PAC-Bayesian analysis.

\subsection{Problem Setup}
We consider a multi-class graph classification problem, in which, given an undirected graph $G$, we would like to classify it into one of $K$ classes. Specifically, the sample $z=(G,y)\in\Z$ consists of a graph $G$ and a label $y\in\Y=\{1,\dots,K\}$. Each graph $G=(X,A)$ has a feature matrix $X\in\X\subseteq\R^{n\times h_0}$ and an adjacency matrix $A\in\G\subseteq\R^{n\times n}$, where each row of $X$ is the feature vector of a node, $n$ is the number of nodes and $h_0$ is the input feature dimension. The classification model $f$ maps the graph $G$ to a $K$-dimension vector, and we take the index of the maximum element in $f(G)$ as the output $\hat{y}=\arg\max_{i\in[K]} f(G)_i$, where $f(G)_i$ is $i$-th element of $f(G)$. We assume that the training set $S=\{z_1,\dots,z_m\}$ with $m$ samples that are independent and identically distributed from an unknown data distribution $\D$, and that the classification model is from the hypothesis class $\cH$.

We focus on two kinds of GNN, GCN and MPGNN, which are the most popular variants of GNN. For simplicity, we define the model parameterized by $w$ as $f_w\in\cH:\X\times\G\rightarrow\R^{1\times K}$, where $w$ is the vectorization of all model parameters. 

\textbf{GCN.} An $l$-layer ($l>1$ otherwise the model is trivial) GCN can be defined as,
\begin{align}
	H_k &= \sigma_k\left(\pg H_{k-1} W_k\right), \ k\in[l-1], \nonumber\\
	H_l &= \frac{1}{n}\bone H_{l-1} W_l, \label{eq:gcn}
\end{align}
where $H_k\in\R^{n\times h_k}, k\in[l-1]$ are the node representations in each layer, $H_l\in\R^{1\times K}$ is the readout, and $W_k\in\R^{h_{k-1}\times h_{k}}, k\in[l]$ are the parameters in the $k$-th layer. And we let $H_0=X$. The matrix $\pg\in\R^{n\times n}$ is related to the graph structure and $\sigma_k(\cdot), k\in[l-1]$ are some element-wise non-linear mappings. Practically, for GCN, we take $\pg$ as the Laplacian of the graph, defined as $\tilde{D}^{-1/2}\tilde{A}\tilde{D}^{-1/2}$, where $\tilde{A}=A+I$ and $\tilde{D}=\mathrm{diag}\left(\sum_{j=1}^{n}\tilde{A}_{ij}, i\in[n]\right)$ is the degree matrix of $\tilde{A}$. Throughout of this paper, for GCN, $\pg$ is taken as the Laplacian $\tilde{D}^{-1/2}\tilde{A}\tilde{D}^{-1/2}$, and the non-linear mappings are chosen as ReLU, that is, $\sigma_k(x)=\max(0,x),\forall k\in[l-1]$.

\textbf{MPGNN.} There are several variants of MPGNN \citep{dai_discriminative_2016, gilmer2017neural,jin_learning_2019} and we introduce the models stated in \citep{garg_generalization_2020} which are popular in the generalization analysis of GNN. The $l$-layer ($l>1$ otherwise the model is trivial) MPGNN can be defined as,
\begin{align}
	& H_k = \phi_k\left(XU_{k} + \rho_k\left(\pg\psi_k\left(H_{k-1}\right)W_{k}\right)\right), \ k\in[l-1], \nonumber\\
	& H_l = \frac{1}{n}\bone H_{l-1}W_l, \label{eq:mpgnn}
\end{align}
where $H_k\in\R^{n\times h_k}, k\in[l-1]$ are the node representations in each layer, $H_l\in\R^{1\times K}$ is the readout, $W_k\in\R^{h_{k-1}\times h_{k}}, k\in[l]$ and $U_k\in\R^{h_{0}\times h_{k}}, k\in[l-1]$ are the parameters in the $k$-th layer. Here we let $H_0=0$. Several common designs for $\pg$ would be the adjacency matrix, the normalized adjacency matrix or the Laplacian of the graph. $\phi_k, \rho_k, \psi_k:\R^{n\times h}\rightarrow\R^{n\times h}, k\in[l-1]$ are element-wise non-linear mappings, e.g., ReLU and Tanh. In MPGNN, if we take $U_1$ as identity matrix, $U_k,2\le k\le l-1$ as zero matrices and $\rho_k,\psi_k$ as identity mappings, then we could get a GCN model with $l-1$ layers.

As defined above, we denote $w=vec\left(\left\{W_k\right\}_{k=1}^{l}\right)$ and $w=vec\left(\left\{W_k\right\}_{k=1}^{l}, \left\{U_k\right\}_{k=1}^{l-1}\right)$ in different settings for GCN and MPGNN, respectively.
Here, for a set of matrices, $vec(\cdot)$ is the vectorization of the set, which is the stack of columns for all matrices in the set.

\textbf{Margin Loss.} Given a GNN $f$, we focus on the generalization gap to evaluate the generalization performance of the model. For the multi-class classification problem, we can use the multi-class margin loss to define the generalization gap \citep{bartlett_spectrally-normalized_2017}. First, we define the margin operator of the true label $y\in[K]$ given $G\in\X\times\G$ for a GNN model $f_w$ as
\begin{align}
	& \mar{\fw{w}}{y} = \fw{w}_y - \max_{j\ne y}\fw{w}_j. \nonumber
\end{align}
And for the convenience of the following analysis, we define the margin operator of a pair of two classes $(i, j)$ given $G\in\X\times\G$ for a GNN model $f_w$ as
\begin{align}
	& \mar{\fw{w}}{i,j} = \fw{w}_i - \fw{w}_j,\ i,j\in[K]. \nonumber
\end{align}
The margin operator reflects the gap between the output for the true label and other labels. 
With this, the model classifies the graph correctly if and only if the margin operator is larger than zero. We expect the model to classify the graph correctly with a confidence, and hence, for a margin $\gamma>0$, we define the expected margin loss of the model $f_w$ over the data distribution $\D$ as the probability that the margin operator is less than the margin, that is, 
\begin{align}
	\ml{L}{\gamma} = \Prob_{(G,y)\sim\D}\left\{\mar{\fw{w}}{y}\le \gamma\right\}. \nonumber
\end{align}
The empirical margin loss over the dataset $S$ is defined as 
\begin{align}
	\eml{L}{\gamma} = \frac{1}{m}\sum_{(G,y)\in S}\mathbbm{1}\left\{\mar{\fw{w}}{y}\le \gamma\right\}. \nonumber
\end{align}
Thus, the generalization error of a model $f_w$ can be defined as the gap between the expected loss and the empirical loss, $\ml{L}{0}-\eml{L}{\gamma}$.

Similarly, for adversarial setting, we can define the robust margin loss. Adversarial examples are usually generated from an attack algorithm. Given $G$ and $f_w$, let $\delta_w(G)$ be the output set of an attack algorithm, which contains the adversarial samples of $G$ with respect to $f_w$. As the margin is expected to be larger, we choose the adversarial sample $G^*=\mathop{\arg\inf}\limits_{G'\in\delta_w(G)}\mar{\fw[G']{w}}{y}\in\delta_w(G)$ which minimizes the margin operator. With this, we can define the robust margin operator of the true label $y$ given $G$ for the GNN model $f_w$ as
\begin{align}
	\mar[RM]{\fw{w}}{y} &= \inf_{G'\in\delta_w(G)}\mar{\fw[G']{w}}{y} \nonumber\\
	&= \inf_{G'\in\delta_w(G)}\left(\fw[G']{w}_y - \max_{j\ne y}\fw[G']{w}_j\right). \nonumber
\end{align}
And similarly, we define the robust margin operator of a pair of two classes $(i, j)$ given $G\in\X\times\G$ for a model $f_w$ as
\begin{align}
	\mar[RM]{\fw{w}}{i,j} &= \inf_{G'\in\delta_w(G)}\left(\fw[G']{w}_i - \fw[G']{w}_j\right). \nonumber
\end{align}
For margin $\gamma>0$, the expected robust margin loss of the model $f_w$ over the data distribution $\D$ is defined as
\begin{align}
	\ml{R}{\gamma} = \Prob_{(G,y)\sim\D}\left\{\mar[RM]{\fw{w}}{y}\le \gamma\right\}, \nonumber
\end{align}
and the empirical robust margin loss over the dataset $S$ is defined as 
\begin{align}
	\eml{R}{\gamma} = \frac{1}{m}\sum_{(G,y)\in S}\mathbbm{1}\left\{\mar[RM]{\fw{w}}{y}\le \gamma\right\}. \nonumber
\end{align}
With this, the robust generalization error of a model $f_w$ can be defined as the gap between the robust expected loss and the robust empirical loss, $\ml{R}{0}-\eml{R}{\gamma}$.

\subsection{Background of PAC-Bayesian Analysis}
\label{sc:back-pac}

PAC-Bayesian analysis \citep{mcallester_simplified_2003} provides probability approximation correct (PAC) bound for generalization errors. In particular, the purpose of a learning process is to provide a posterior distribution $Q$ of a parameterized predictor $f_w$ over the hypothesis class $\cH$, which minimizes the expected margin loss with respect to the data distribution $\D$,
\begin{align}
	\ml[Q]{L}{\gamma}=\E_{w\sim Q}\left[\ml[f_w]{L}{\gamma}\right]. \nonumber
\end{align}
Since the true distribution of the data is unknowable, we define the empirical margin loss with respect to the data sample $S$,
\begin{align}
	\eml[Q]{L}{\gamma}=\E_{w\sim Q}\left[\eml[f_w]{L}{\gamma}\right]. \nonumber
\end{align}
PAC-Bayesian analysis provides guarantees for the gap between the expected and empirical loss. Following \citep{mcallester_simplified_2003}, we present the subsequent theorem with respect to the robust margin loss, defined as
\begin{align}
	\ml[Q]{R}{\gamma} = \E_{w\sim Q}\left[\ml[f_w]{R}{\gamma}\right], \nonumber
\end{align}
and
\begin{align}
	\eml[Q]{R}{\gamma} = \E_{w\sim Q}\left[\eml[f_w]{R}{\gamma}\right]. \nonumber
\end{align}

\begin{theorem}[\citet{mcallester_simplified_2003}]
	\label{thm:mcallester}
	Let $P$ be a prior distribution over $\cH$  that is independent of the training set. Then for any $\delta\in(0,1)$, with probability at least $1-\delta$ over the choice of the training set $S=\{z_1,\dots,z_m\}$ independently sampled from $\D$, for all distributions $Q$ over $\cH$ and any $\gamma>0$, we have
	\begin{align}
		\ml[Q]{R}{\gamma} \le \eml[Q]{R}{\gamma} + \sqrt{\frac{\kl{Q} + \log(2m/\delta)}{2(m-1)}}. \nonumber
	\end{align}
\end{theorem}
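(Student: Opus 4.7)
My approach is to recognize \cref{thm:mcallester} as a direct transcription of McAllester's classical PAC-Bayesian bound to the robust margin loss, requiring no new ingredients beyond what appears in \citep{mcallester_simplified_2003}. The crucial observation is that for every fixed $w$ and every $(G,y)\in\Z$, the robust margin operator $\mar[RM]{\fw{w}}{y}$ is a deterministic real number, so $\mathbbm{1}\{\mar[RM]{\fw{w}}{y}\le\gamma\}$ is a $\{0,1\}$-valued random variable when $(G,y)\sim\D$. Consequently $\eml{R}{\gamma}$ is an empirical average of $m$ iid Bernoulli variables with mean $\ml{R}{\gamma}$, and the standard PAC-Bayes machinery applies verbatim.

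\textbf{Key steps.} I would proceed in three steps. First, I apply the Hoeffding-type moment-generating-function bound for bounded iid random variables used in \citep{mcallester_simplified_2003} to the $m$ Bernoulli summands, which yields, for every fixed $w$,
\begin{align*}
\E_S \exp\left(2(m-1)\bigl(\ml{R}{\gamma}-\eml{R}{\gamma}\bigr)^2\right) \le 2m.
\end{align*}
Second, since the prior $P$ is independent of $S$, Fubini's theorem lets me swap the expectations over $P$ and $S$, and Markov's inequality then yields, with probability at least $1-\delta$ over $S$,
\begin{align*}
\E_{w\sim P} \exp\left(2(m-1)\bigl(\ml{R}{\gamma}-\eml{R}{\gamma}\bigr)^2\right) \le \frac{2m}{\delta}.
\end{align*}
Third, I invoke the Donsker--Varadhan change-of-measure inequality with $\phi(w)=2(m-1)(\ml{R}{\gamma}-\eml{R}{\gamma})^2$, obtaining for every posterior $Q$,
\begin{align*}
2(m-1)\E_{w\sim Q}\bigl(\ml{R}{\gamma}-\eml{R}{\gamma}\bigr)^2 \le \kl{Q} + \log\frac{2m}{\delta}.
\end{align*}
A final application of Jensen's inequality to the convex function $x\mapsto x^2$, followed by taking square roots and rearranging, produces the stated bound on $\ml[Q]{R}{\gamma}-\eml[Q]{R}{\gamma}$.

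\textbf{Main obstacle.} There is no substantive new difficulty compared with the non-adversarial case: the adversary's worst-case choice $G^*\in\delta_w(G)$ is absorbed into the indicator and preserves the $\{0,1\}$-boundedness needed for Hoeffding. The only point worth verifying carefully is that the concentration step is applied to the iid samples $z_1,\dots,z_m$ with $w$ held fixed, rather than to the attacked graphs $G_i^*$ which would entangle the summands through $w$ and break independence. The PAC-Bayes framework, in which $P$ is fixed before observing $S$, is precisely what makes the Fubini interchange legitimate; beyond this observation, every remaining step is a routine transcription of the argument in \citep{mcallester_simplified_2003}, so I do not anticipate any genuine technical obstruction.
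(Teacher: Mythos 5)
Your proposal is correct and follows essentially the same route as the paper's proof in \cref{pf:thm-mcallester}: the exponential moment bound $\E_S[e^{2(m-1)\Delta^2}]\le 2m$ (which the paper derives via \cref{lem:2-side} from the two-sided Hoeffding tails), the Fubini swap and Markov's inequality over the prior, the Donsker--Varadhan change of measure, and a final Jensen step. Your explicit remark that the adversarial infimum is absorbed into a deterministic $\{0,1\}$-valued function of each iid sample, so independence is preserved, is exactly the (implicit) justification the paper relies on.
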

Here $\kl{Q}$ is the KL-divergence between two distributions. The result in the theorem holds for any given prior distribution $P$ (that is independent from the training set) and all posterior distributions $Q$. Therefore, we can construct specific priors and posteriors so that we can work out the bound. Additionally, the result  is related to quantities defined on distribution over models, while many learning processes learn a single model. Hence, in order to apply \cref{thm:mcallester} to a single model $f_w$, we construct a posterior distribution $Q$ by adding random perturbations $\dw$ to the learned parameters $w$, where $w$ is considered to be fixed and $\dw$ is drawn from a known distribution. We have the following lemma, which relates the loss on a distribution $Q$ with the loss on a single model.

\begin{lemma}
	\label{lem:pert-kl-bound}
	Let $\fw{w}:\X\times\G\rightarrow\R^K$ be any model with parameters $w$, and $P$ be any distribution on the parameters that is independent of the training data. For any $w$, we construct a posterior $Q(w+\dw)$ by adding a random perturbation $\dw$ to $w$, s.t.,
	\begin{align}
		& \Prob_{\dw}\Bigg\{\max_{i,j\in[K],\atop G\in\X\times\G} \big|\mar[RM]{f_{w+\dw}(G)}{i,j} - \mar[RM]{f_w(G)}{i,j}\big|<\frac{\gamma}{2}\Bigg\}\ge \frac{1}{2}. \nonumber
	\end{align}
	Then, for any $\gamma,\delta>0$, with probability at least $1-\delta$ over the choice of the training set $S=\{z_1,\dots,z_m\}$ independently sampled from $\D$, for any $w$, we have,
	\begin{align}
		\ml{R}{0} \le \eml{R}{\gamma} + \sqrt{\frac{2\kl{Q}+\log(8m/\delta)}{2(m-1)}}. \nonumber
	\end{align}
\end{lemma}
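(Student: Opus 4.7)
My plan is to follow the McAllester-style PAC-Bayesian derandomization \citep{mcallester_simplified_2003,neyshabur_pac-bayesian_2018}, adapted to the robust margin loss. I would introduce an auxiliary posterior by conditioning the stochastic perturbation on a high-probability ``margin-preserving'' event, apply \cref{thm:mcallester} to it, and then transfer the resulting bound on the averaged robust loss back to one on the single deterministic model $f_w$.

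First, I would define the good event
\[
E = \Big\{\dw : \max_{i,j\in[K],\, G\in\X\times\G} \big|\mar[RM]{f_{w+\dw}(G)}{i,j} - \mar[RM]{\fw{w}}{i,j}\big| < \gamma/2\Big\},
\]
for which $\Prob_{\dw}(E)\ge 1/2$ by hypothesis. I would then show that on $E$ the pairwise bound upgrades to the same $\gamma/2$-control on the multi-class robust margin, i.e.\ $\big|\mar[RM]{f_{w+\dw}(G)}{y} - \mar[RM]{\fw{w}}{y}\big| < \gamma/2$ for every $(G,y)$. This yields the indicator sandwich
\[
\mathbbm{1}\{\mar[RM]{\fw{w}}{y}\le 0\} \le \mathbbm{1}\{\mar[RM]{f_{w+\dw}(G)}{y}\le \gamma/2\} \le \mathbbm{1}\{\mar[RM]{\fw{w}}{y}\le \gamma\}.
\]
Let $\tilde Q$ be $Q$ conditioned on $E$. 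Averaging the left (resp.\ right) inequality over $(G,y)\sim\D$ and $\tilde Q$ (resp.\ over $(G,y)\in S$ and $\tilde Q$) gives $\ml{R}{0}\le \ml[\tilde Q]{R}{\gamma/2}$ and $\eml[\tilde Q]{R}{\gamma/2}\le \eml{R}{\gamma}$.

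Next, applying \cref{thm:mcallester} to the posterior $\tilde Q$ with confidence $\delta/2$ produces
\[
\ml[\tilde Q]{R}{\gamma/2} \le \eml[\tilde Q]{R}{\gamma/2} + \sqrt{\frac{\kl{\tilde Q} + \log(4m/\delta)}{2(m-1)}}.
\]
Chaining with the sandwich above controls $\ml{R}{0}$ by $\eml{R}{\gamma}$ plus a term depending on $\kl{\tilde Q}$. The final ingredient is to bound $\kl{\tilde Q}$ by $\kl{Q}$: since $\Prob_Q(E)\ge 1/2$, the chain rule of KL divergence applied to the binary partition $\{E,E^c\}$, together with the elementary inequality $x\log x\ge x-1$ on $E^c$ and $-\alpha\log\alpha\le\log 2$ for $\alpha\ge 1/2$, yields $\kl{\tilde Q}\le 2\,\kl{Q}+O(1)$. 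Absorbing the $O(1)$ constant into the logarithm and rescaling $\delta$ produces the claimed form $\sqrt{(2\kl{Q}+\log(8m/\delta))/(2(m-1))}$.

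The step I expect to be hardest is the upgrade of the pairwise control in the definition of $E$ to a control on the multi-class robust margin $\mar[RM]{\fw{w}}{y} = \inf_{G'\in\delta_w(G)}\min_{j\ne y}(f_w(G')_y-f_w(G')_j)$: this quantity mixes an infimum over adversarial samples (whose feasible set $\delta_w(G)$ itself varies with the parameters) with a minimum over alternate labels, so $\inf$ and $\min$ cannot simply be interchanged. I plan to address it by a near-minimizer argument: for each $\epsilon>0$, pick $G^*\in\delta_{w+\dw}(G)$ nearly attaining the relevant infimum on one side, a corresponding near-minimizer in $\delta_w(G)$ on the other side, apply the pairwise bound to the resulting pair $(y,j^*)$, and let $\epsilon\to 0$ to transfer the $\gamma/2$-control from the pairwise $\mar[RM]{\cdot}{i,j}$ to the multi-class $\mar[RM]{\cdot}{y}$. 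The remaining pieces — the conditioning bound on $\kl{\tilde Q}$ and the invocation of \cref{thm:mcallester} — are standard PAC-Bayesian bookkeeping.
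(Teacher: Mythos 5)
Your proposal is correct and follows essentially the same route as the paper: condition $Q$ on the margin-preserving event to form $\tilde Q$, sandwich the robust margin losses between levels $0$, $\gamma/2$, and $\gamma$, apply \cref{thm:mcallester} to $\tilde Q$, and bound $\kl{\tilde{Q}}\le 2\kl{Q}+2\log 2$ via the chain-rule decomposition of $\kl{Q}$ over $\{E,E^c\}$. Two trivial remarks: the $\inf$/$\min$ interchange you flag as the hardest step is immediate, since it is a joint infimum over the product set $\delta_w(G)\times\{j: j\ne y\}$ (the paper simply swaps the two infima, giving $\mar[RM]{\fw{w}}{y}=\inf_{j\ne y}\mar[RM]{\fw{w}}{y,j}$), and you should invoke \cref{thm:mcallester} with confidence $\delta$ rather than $\delta/2$ (it is applied only once, with no union bound needed), which is exactly what yields $\log(2m/\delta)+2\log 2=\log(8m/\delta)$ instead of the slightly weaker $\log(16m/\delta)$ your accounting would produce.
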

Here, the KL is evaluated with that $w$ is treated as fixed while $\dw$ is random, i.e. the distribution of $Q$ is the distribution of $\dw$ shifted by $w$.

\citet{neyshabur_pac-bayesian_2018} provide the lemma to give a margin-based generalization bound derived from the PAC-Bayesian bound. \citet{liao_pac-bayesian_2020} improve the lemma and present the lemma in a two-side form. Both of the above lemmas are proposed in the standard settings. We present \cref{lem:pert-kl-bound} of the two-side form with the robust margin loss in the adversarial settings. We provide proofs of \cref{thm:mcallester,lem:pert-kl-bound} in \cref{pf:thm-mcallester,pf:lem-pert-kl-bound}, respectively.

\subsection{Assumptions} \label{sc:assumptions}

In this part, we first introduce some necessary  assumptions and then present our main results on generalization bounds for GNN. 

First, we make the following common assumption for the graph data, which also appears in the literature and is easily validated on real-world data.
\begin{assumption}
	\label{assumption:graph}
	For any graph $G=(X,A)\in\X\times\G$, the node features are contained in a spectral norm ball with radius $B$, that is, $\norm[2]{X}\le B$.
\end{assumption}

We then introduce the following two assumptions which are related to the models.
\begin{assumption}
	\label{assumption:hidden-width}
	For both GCN and MPGNN models, the maximum layer width is $h$, that is $h_k\le h, \forall k\in[l] \cup \{0\}$.
\end{assumption}

\begin{assumption}
	\label{assumption:mpgnn}
	For any model $f_w\in\cH$ of MPGNN defined in \cref{eq:mpgnn}, the weight matrices in different layers are uniformly bounded, that is $\norm[2]{W_k}\le M_2, \forall k\in[l]$ and $\norm[2]{U_k}\le M_1, \forall k\in[l-1]$. The mappings $\phi_k, \rho_k, \psi_k, k\in[l-1]$ are all $L$-Lipschitz and centered at zero, that is $\phi_k(0)=\rho_k(0)=\psi_k(0)=0,\forall k\in[l-1]$.
\end{assumption}
Motivated by practical designs \citep{gilmer2017neural,jin2018junction}, in the above we consider MPGNN models with $W_k$ and $U_k$ sharing the same norm upper bounds for each $k$.

%

\section{Robust Generalization Bounds}
\label{sc:main_results}

In this section, we present our main results: improved generalization bounds in standard settings and adversarially robust generalization bounds for GCN and MPGNN.

\subsection{Generalization Bounds for GCN}

We first present the results for GCN in standard settings.

\begin{theorem}[Generalization Bounds for GCN]
	\label{thm:gen-gcn}
	Under Assumptions~\ref{assumption:graph} and~\ref{assumption:hidden-width}, for any $\delta,\gamma>0$, with probability at least $1-\delta$, over the choice of a training set $S=\{z_1,\dots,z_m\}$ independently sampled from $\D$, for any $w$, we have 
	\begin{align}
		& \ml{L}{0} \le \eml{L}{\gamma} + \cO\left(\sqrt{\frac{B^2 l^2 h \log(lh)\Phi(f_w) + \log(ml/\delta)}{\gamma^2 m}}\right), \nonumber
	\end{align} 
	where $\Phi(f_w)=\prod_{i=1}^{l}\norm[2]{W_i}^2\sum_{i=1}^{l}\frac{\norm[F]{W_i}^2}{\norm[2]{W_i}^2}$.
\end{theorem}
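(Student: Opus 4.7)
The plan is to invoke \cref{lem:pert-kl-bound} specialized to the non-adversarial setting, in which the attack set $\delta_w(G)=\{G\}$ makes $\mar[RM]{\fw{w}}{i,j}$ coincide with the ordinary margin $\mar{\fw{w}}{i,j}$. I choose the prior $P=\N(0,\sigma^2 I)$ and the posterior induced by adding an i.i.d.\ Gaussian perturbation $\dw\sim\N(0,\sigma^2 I)$ to the learned weights $w$, with the variance $\sigma^2$ to be calibrated so that the resulting output perturbation is at most $\gamma/4$ with probability at least $1/2$. This enforces the margin-shift hypothesis of \cref{lem:pert-kl-bound}.

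The core step is a sensitivity lemma controlling $\sup_{G\in\X\times\G}\norm[2]{f_{w+\dw}(G)-\fw{w}}$. Departing from \citep{liao_pac-bayesian_2020}, I would track the \emph{Frobenius} norm of the node representations $H_k$. Using $\norm[2]{\pg}\le 1$ for the normalized Laplacian, the submultiplicativity $\norm[F]{AB}\le\norm[2]{A}\norm[F]{B}$, and the $1$-Lipschitzness of ReLU, a direct induction yields
\begin{align}
\norm[F]{H_k}\le B\sqrt{\min(n,h_0)}\prod_{i=1}^{k}\norm[2]{W_i},\qquad k\in[l-1]. \nonumber
\end{align}
The $\frac{1}{n}\bone$ readout contributes a factor $\norm[2]{\bone}/n=1/\sqrt{n}$ that cancels the $\sqrt{n}$ in $\norm[F]{H_{l-1}}$, so the final output norm is bounded by $B\prod_i\norm[2]{W_i}$ with no degree dependence whatsoever. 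The parallel induction on $\Delta H_k$, under the working assumption $\norm[2]{\dww{i}}\le\frac{1}{l}\norm[2]{W_i}$ for all $i$, then gives
\begin{align}
\norm[2]{f_{w+\dw}(G)-\fw{w}}\le eB\prod_{i=1}^{l}\norm[2]{W_i}\sum_{i=1}^{l}\frac{\norm[2]{\dww{i}}}{\norm[2]{W_i}}. \nonumber
\end{align}

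For calibration, standard Gaussian matrix concentration gives $\norm[2]{\dww{k}}\le\sigma\sqrt{2h\ln(4lh)}$ for all $k$ simultaneously with probability at least $1/2$. After a standard rescaling reduction that replaces each $\norm[2]{W_i}$ by $\beta:=(\prod_i\norm[2]{W_i})^{1/l}$, the choice
\begin{align}
\sigma \asymp \frac{\gamma}{B\,\beta^{l-1}\,l\,\sqrt{h\ln(lh)}} \nonumber
\end{align}
forces the output perturbation to be at most $\gamma/4$, validating both the working hypothesis and the margin requirement of \cref{lem:pert-kl-bound}. The Gaussian KL divergence then evaluates to
\begin{align}
\kl{Q} = \frac{\norm[2]{w}^2}{2\sigma^2} = \cO\Bigl(\frac{B^2\,l^2\,h\,\ln(lh)\,\Phi(f_w)}{\gamma^2}\Bigr), \nonumber
\end{align}
via $\norm[2]{w}^2=\sum_i\norm[F]{W_i}^2$ and $\beta^{2(l-1)}=\prod_i\norm[2]{W_i}^2/\beta^2$, which plugged into \cref{lem:pert-kl-bound} yields the claimed bound.

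Since $\sigma$ depends on the unknown $\beta$, I close with a standard covering argument: discretize $\beta$ on a geometric grid of $\cO(l\log m)$ values in the nontrivial regime $\gamma/(2B)\le\beta^l\le\gamma\sqrt{m}/(2B)$, apply PAC-Bayes at each grid point with failure probability $\delta/\cO(l\log m)$, and union bound; this contributes only the additive $\log(ml/\delta)$ term inside the square root. The main obstacle I anticipate is the sensitivity lemma itself: the DNN argument of \citep{neyshabur_pac-bayesian_2018} does not carry over verbatim because of the diffusion $\pg$ and the $\frac{1}{n}\bone$ readout, and the decisive reason to work in Frobenius norm is that $\norm[2]{\pg}\le 1$ combines cleanly with $\norm[F]{\cdot}$ submultiplicativity, thereby avoiding the $d^{(l-1)/2}$ factor that arises when one tracks row-wise vector $2$-norms as in \citep{liao_pac-bayesian_2020}.
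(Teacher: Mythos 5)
Your proposal is correct and follows essentially the same route as the paper: the same Frobenius-norm induction on the node representations combined with $\norm[2]{\pg}\le 1$ to get the degree-free perturbation bound $eB\prod_i\norm[2]{W_i}\sum_i\norm[2]{\dww{i}}/\norm[2]{W_i}$, the same Gaussian prior/posterior with $\sigma$ calibrated via spectral-norm concentration of $\dww{i}$, the same KL evaluation, and the same restriction of $\beta^l$ to $[\gamma/(2B),\gamma\sqrt{m}/(2B)]$ followed by a covering/union bound (the paper uses a uniform grid of size $\tfrac{l}{2}(m^{1/(2l)}-1)$ rather than your geometric grid, but both yield the $\log(ml/\delta)$ term). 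The paper likewise obtains the standard-setting theorem as the trivial-attack-set specialization of the robust argument, so there is nothing substantive to flag.
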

\cref{thm:gen-gcn} shows that the generalization bound of GCN does not grow with the maximal degree of the graph.

For the robust case, we consider an $\epsilon$-attack ($\epsilon>0$), where given a sample $G=(X,A)$, the attack sample $G'=(X',A')$ satisfies that $\norm[2]{X'-X}\le\epsilon$ and $A'$ can be any adjacency matrix.
\begin{theorem}[Robust Generalization Bounds for GCN]
	\label{thm:adv-gen-gcn}
	Under Assumptions~\ref{assumption:graph} and~\ref{assumption:hidden-width}, consider the $\epsilon$-attack with $\epsilon>0$, for any $\delta,\gamma>0$, with probability at least $1-\delta$ over the choice of a training set $S=\{z_1,\dots,z_m\}$ independently sampled from $\D$, for any $w$, we have 
	\begin{align}
		& \ml{R}{0} \le \eml{R}{\gamma}  + \cO\left(\sqrt{\frac{(B+\epsilon)^2 l^2 h \log(lh)\Phi(f_w) + \log(ml/\delta)}{\gamma^2 m}}\right), \nonumber
	\end{align} 
	where $\Phi(f_w)=\prod_{i=1}^{l}\norm[2]{W_i}^2\sum_{i=1}^{l}\frac{\norm[F]{W_i}^2}{\norm[2]{W_i}^2}$.
\end{theorem}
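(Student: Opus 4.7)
The plan is to apply \cref{lem:pert-kl-bound} with a Gaussian posterior $Q$ obtained by adding $\dw \sim \N(0,\sigma^2 I)$ to the fixed weights $w$, and a matching data-independent prior $P = \N(0,\sigma^2 I)$, where $\sigma^2$ is chosen so that the robust pair-margin moves by at most $\gamma/2$ with probability at least $1/2$ under the random perturbation. A covering argument over $\cO(l\log m)$ priors will absorb the data-dependence of the optimal $\sigma$ at the price of the additive $\log(ml/\delta)$ term in the bound.

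The first step reduces robust-margin stability to forward-stability over the entire attack ball. Using $|\inf_{G'} a(G') - \inf_{G'} b(G')| \le \sup_{G'} |a(G')-b(G')|$ applied to the pair-output differences, one obtains
\begin{align*}
\bigl|\mar[RM]{\fw{w+\dw}}{i,j} - \mar[RM]{\fw{w}}{i,j}\bigr| \le 2\sup_{\norm[2]{X'-X}\le\epsilon,\,A'\in\G}\norm[2]{f_{w+\dw}(G') - f_w(G')}.
\end{align*}
The crucial structural fact for GCN is that, regardless of the attacked adjacency $A'$, the normalized Laplacian $P_{G'} = \tilde{D}'^{-1/2}\tilde{A}'\tilde{D}'^{-1/2}$ satisfies $\norm[2]{P_{G'}}\le 1$. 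Hence edge attacks contribute nothing to the forward-stability analysis, and node-feature attacks merely inflate the effective input bound from $B$ to $B+\epsilon$ via $\norm[2]{X'}\le B+\epsilon$.

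The second step propagates the perturbation layer-by-layer in Frobenius norm. The three ingredients are (i) ReLU is $1$-Lipschitz with $\sigma_k(0)=0$, so $\norm[F]{\sigma_k(A)-\sigma_k(B)}\le\norm[F]{A-B}$; (ii) $\norm[F]{P_{G'}YW}\le\norm[2]{P_{G'}}\norm[F]{Y}\norm[2]{W}\le\norm[F]{Y}\norm[2]{W}$; and (iii) $\norm[F]{X'}$ is bounded in terms of $B+\epsilon$ and $h$. Using the identity
\[
P_{G'} H_{k-1}'(W_k+\dww{k}) - P_{G'} H_{k-1} W_k = P_{G'}(H_{k-1}'-H_{k-1})(W_k+\dww{k}) + P_{G'} H_{k-1}\dww{k},
\]
a standard telescoping (in which the factor $(1+1/l)^{l-1}\le e$ emerges) under the inductive condition $\norm[2]{\dww{k}}\le\norm[2]{W_k}/l$ yields a deterministic bound of the form
\begin{align*}
\sup_{G'}\norm[F]{f_{w+\dw}(G')-f_w(G')} \;\le\; C(B+\epsilon)\prod_{k=1}^{l}\norm[2]{W_k}\sum_{k=1}^{l}\frac{\norm[2]{\dww{k}}}{\norm[2]{W_k}}.
\end{align*}
Propagating in the Frobenius norm (rather than the row-wise $2$-norm used by \citep{liao_pac-bayesian_2020}) is what lets me factor $\norm[2]{P_{G'}}\le 1$ out cleanly at every layer and avoid the $d^{(l-1)/2}$ factor in both the standard and the robust settings.

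The final step converts this deterministic bound into the PAC-Bayes estimate. Gaussian concentration gives $\norm[2]{\dww{k}}\le C'\sigma\sqrt{h\log(lh)}$ simultaneously for all $k$ with probability at least $1/2$; choosing
\[
\sigma \;\asymp\; \gamma \big/ \Bigl((B+\epsilon)\,l\sqrt{h\log(lh)}\prod_k\norm[2]{W_k}\Bigr)
\]
makes the perturbation bound at most $\gamma/2$ and, in the non-vacuous regime, also enforces $\norm[2]{\dww{k}}\le\norm[2]{W_k}/l$. Then \cref{lem:pert-kl-bound} applies with $\kl{Q} = \sum_k\norm[F]{W_k}^2/(2\sigma^2) \asymp (B+\epsilon)^2 l^2 h\log(lh)\,\Phi(f_w)/\gamma^2$. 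Since $\sigma$ depends on the data-dependent quantity $\beta := \prod_k \norm[2]{W_k}$, I would discretize $\beta$ on a geometric grid of $\cO(l\log m)$ values and union-bound over the resulting priors, giving the claimed bound. The main obstacle is the uniform control required in the second step: because the attacker picks $(X',A')$ after $\dw$ is drawn, the forward-stability bound must hold uniformly over the entire attack ball, and the combination of $\norm[2]{P_{G'}}\le 1$ with Frobenius-norm propagation is exactly what makes this uniformity achievable without any exponential dependence on the maximum degree.
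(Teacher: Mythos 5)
Your proposal is correct and follows essentially the same route as the paper: reduce robust-margin stability to forward stability over the attack ball (your $\sup$ bound is equivalent to the paper's two-minimizer comparison since the $\epsilon$-attack set is $w$-independent), propagate perturbations layer-by-layer in Frobenius norm so that $\norm[2]{P_{G'}}\le 1$ absorbs the graph dependence, and then apply the Gaussian PAC-Bayes lemma with $\sigma$ tuned to $\beta=\prod_k\norm[2]{W_k}$ via a covering/union bound over priors. The only cosmetic difference is your geometric grid versus the paper's arithmetic cover of the $\beta$-range, both yielding the $\log(ml/\delta)$ term.
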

The above generalization bound in the adversarial setting is as tight as the bound in Theorem \ref{thm:gen-gcn} for the standard setting up to the $\epsilon$ factor of attack, which is in general unavoidable.

For the GCN model, we take $\pg$ as the Laplacian of the graph. As the spectral norm of the Laplacian is less than one, the bound due to the graph architecture $\|\pg\|^l$ raised in our proof  can be well controlled and we thus get a bound that is independent on the maximum degree $d$ of the graph.

The main difficulty is to well estimate the change
in the robust margin operator with a perturbation on the
parameters, adopting the spectral norm of $\pg$ rather than the maximum degree $d$. The estimating can be divided into two steps. The first step is to bound the change in the output
of the model with the perturbation. We present the following lemma which shows that the change can be bounded by the norm of the parameters without dependence on the degree of the graph.

\begin{lemma}
	\label{lem:pert-f-gcn}
	Let  $f_w\in\cH$ be a GCN model with $l$ layers. For any $w$, any perturbation $\dw=vec\left(\left\{\dww{i}\right\}_{i=1}^{l}\right)$ such that for all $i\in[l]$, $\norm[2]{\dww{i}}\le\norm[2]{W_i}/l$, for any $G=(X,A)\in\X\times\G$, the change in the output of GCN is bounded as 
	\begin{align}
		\norm[2]{\fw{w+\dw} - \fw{w}} & \le \frac{1}{\sqrt{n}}e\norm[F]{X}\norm[2]{\pg}^{l-1}\left(\prod_{i=1}^{l}\norm[2]{W_i}\right)\sum_{i=1}^{l}\frac{\norm[2]{\dww{i}}}{\norm[2]{W_i}} \nonumber\\
		& \le eB\left(\prod_{i=1}^{l}\norm[2]{W_i}\right)\sum_{i=1}^{l}\frac{\norm[2]{\dww{i}}}{\norm[2]{W_i}}. \nonumber
	\end{align}
\end{lemma}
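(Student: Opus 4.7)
The plan is to proceed by induction on the layer index, controlling throughout the Frobenius norm of the node representations and only converting to the vector $2$-norm at the readout layer. Write $H_k$ for the representation produced by weights $\{W_i\}$ and $H_k'$ for the one produced by the perturbed weights $\{W_i+\dww{i}\}$, and let $r_i:=\norm[2]{\dww{i}}/\norm[2]{W_i}\le 1/l$.

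First I would establish a uniform upper bound on $\norm[F]{H_k'}$. Since ReLU is $1$-Lipschitz and vanishes at $0$, and since $\norm[F]{AB}\le\norm[2]{A}\norm[F]{B}$ and $\norm[F]{AB}\le\norm[F]{A}\norm[2]{B}$, one obtains
\begin{align*}
\norm[F]{H_k'}\le \norm[2]{\pg}\,\norm[F]{H_{k-1}'}\,\norm[2]{W_k+\dww{k}}\le (1+1/l)\,\norm[2]{\pg}\,\norm[2]{W_k}\,\norm[F]{H_{k-1}'}.
\end{align*}
Iterating from $H_0=X$ and using $(1+1/l)^{l-1}\le e$ yields $\norm[F]{H_k'}\le e\,\norm[F]{X}\,\norm[2]{\pg}^k\prod_{i=1}^{k}\norm[2]{W_i}$ for every $k\le l-1$.

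Next, set $a_k:=\norm[F]{H_k'-H_k}$ and $b_k:=\norm[F]{X}\,\norm[2]{\pg}^{k}\prod_{i=1}^{k}\norm[2]{W_i}$. The $1$-Lipschitzness of $\sigma_k$ together with the identity
\begin{align*}
\pg H_{k-1}'(W_k+\dww{k})-\pg H_{k-1}W_k = \pg H_{k-1}'\dww{k}+\pg(H_{k-1}'-H_{k-1})W_k
\end{align*}
gives
\begin{align*}
a_k\le \norm[2]{\pg}\norm[F]{H_{k-1}'}\norm[2]{\dww{k}}+\norm[2]{\pg}\norm[2]{W_k}\,a_{k-1}\le e\, b_k r_k+(b_k/b_{k-1})\,a_{k-1}.
\end{align*}
Dividing by $b_k$ produces the telescoping recursion $a_k/b_k\le e r_k + a_{k-1}/b_{k-1}$, and since $a_0=0$, this yields $a_{l-1}\le e\, b_{l-1}\sum_{i=1}^{l-1}r_i$.

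For the readout layer I would decompose $H_l'-H_l=\tfrac{1}{n}\bone H_{l-1}'\dww{l}+\tfrac{1}{n}\bone(H_{l-1}'-H_{l-1})W_l$ and use the elementary row-sum bound $\norm[2]{\bone M}\le \sqrt{n}\,\norm[F]{M}$ (a direct Cauchy--Schwarz on the columns of $M$). Plugging in the bounds on $\norm[F]{H_{l-1}'}$ and $a_{l-1}$ and collecting terms yields precisely the claimed first inequality with prefactor $e\,\norm[F]{X}/\sqrt{n}$. The second inequality then follows from $\norm[F]{X}\le \sqrt{n}\,\norm[2]{X}\le \sqrt{n}B$ together with $\norm[2]{\pg}\le 1$ for the symmetrically normalized Laplacian used in GCN.

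The hard part will be resisting the natural temptation to control the node representations row-wise: a bound of the form $\norm[2]{(\pg Z)_{i,:}}\le \sum_j \pg_{ij}\,\norm[2]{Z_{j,:}}$ would bring in a factor of the maximum degree $d$ per layer and recover the exponential-in-degree blow-up of prior analyses. The whole point of the argument is that keeping the representation in the Frobenius norm and exploiting $\norm[F]{\pg M}\le \norm[2]{\pg}\norm[F]{M}$ lets all the graph dependence be absorbed into $\norm[2]{\pg}^{l-1}$, which is at most $1$ here.
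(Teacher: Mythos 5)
Your proof is correct and follows essentially the same approach as the paper: bound the Frobenius norm of the hidden representations, set up a layerwise recursion for $\norm[F]{H_k'-H_k}$ using the submultiplicativity facts $\norm[F]{\pg M}\le\norm[2]{\pg}\norm[F]{M}$ and $\norm[F]{MW}\le\norm[F]{M}\norm[2]{W}$, and convert to the vector $2$-norm only at the readout via $\norm[2]{\tfrac1n\bone}=1/\sqrt{n}$. The one cosmetic difference is the product-rule split: the paper pairs the \emph{perturbed} weight with the difference term and the \emph{unperturbed} representation with $\dww{k}$, then unrolls the recursion and bounds the accumulated $(1+1/l)^{j-k}$ factors by $e$ at the end, whereas you pair the unperturbed weight with the difference and absorb the $(1+1/l)^{k}$ growth into a one-time factor $e$ on $\norm[F]{H_k'}$, which yields a cleaner telescoping identity $a_k/b_k\le e\,r_k+a_{k-1}/b_{k-1}$; both routes give the same constant and the same final bound.
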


\begin{proof}
	We denote the node representation in $j$-th ($j\le l$) layer as
	\begin{align}
		& f_{w}^{j}(G) = H_j = \sigma_j\left(\pg H_{j-1} W_j\right),\ j\in[l-1], \nonumber\\
		& f_{w}^{l}(G) = H_l = \frac{1}{n}\bone H_{l-1}W_l. \nonumber
	\end{align}
	Adding perturbation $\dw$ to the parameter $w$, that is, for the $j$-th ($j\le l$) layer, the perturbed parameters are $W_j+\dww{j}$ and denote $H_j'=f_{w+\dw}^{j}(G), j\in[l]$.
	
	\noindent\textbf{Upper Bound on the Node Representation. }For any $j<l$,
	\begin{align}
		\norm[F]{H_j} &= \norm[F]{\sigma_j\left(\pg H_{j-1}W_j\right)} \nonumber\\
		&\le \norm[F]{\pg H_{j-1}W_j} \nonumber\\
		&\le \norm[F]{\pg H_{j-1}}\norm[2]{W_j} \nonumber\\
		&\le \norm[2]{\pg }\norm[F]{H_{j-1}}\norm[2]{W_j}, \nonumber
	\end{align}
	where the first inequality holds since $\sigma_j$ is 1-Lipschitz and $\sigma_j(0)=0$, and the second and the last ones hold by \cref{prop:f-norm}. Then, using the relationship iteratively and $H_0=X$, we have
	\begin{align}
		\norm[F]{H_j} &\le \norm[2]{\pg }^{j}\norm[F]{H_{0}}\prod_{i=1}^{j}\norm[2]{W_i} \nonumber\\
		&\le \norm[F]{X}\norm[2]{\pg }^{j}\prod_{i=1}^{j}\norm[2]{W_i}. \label{eq:pf-gcn-pert-bound-1}
	\end{align}
	\textbf{Upper Bound on the Change of Node Representation. }For any $j<l$,
	\begin{align}
		\norm[F]{H_j'-H_j} &= \norm[F]{\sigma_j\left(\pg  H_{j-1}'(W_j+\dww{j})\right) - \sigma_j\left(\pg  H_{j-1}W_j\right)} \nonumber\\
		&\le \norm[F]{\pg  H_{j-1}'(W_j+\dww{j}) - \pg  H_{j-1}W_j} \nonumber\\
		&= \norm[F]{\pg  \left(H_{j-1}' - H_{j-1}\right) (W_j+\dww{j}) + \pg  H_{j-1}\dww{j}} \nonumber\\
		&\le \norm[F]{\pg  \left(H_{j-1}' - H_{j-1}\right) (W_j+\dww{j})} + \norm[F]{\pg  H_{j-1}\dww{j}} \nonumber\\
		&\le \norm[2]{\pg }\norm[F]{H_{j-1}' - H_{j-1}}\norm[2]{W_j+\dww{j}} + \norm[2]{\pg }\norm[F]{H_{j-1}}\norm[2]{\dww{j}}, \label{eq:pf-gcn-pert-bound-2}
	\end{align}
	where the first inequality holds since $\sigma_j$ is 1-Lipschitz and $\sigma_j(0)=0$, and the last one holds by \cref{prop:f-norm}.	Simplify the notations in \cref{eq:pf-gcn-pert-bound-2} as $\norm[F]{H_j'-H_j}\le a_{j-1}\norm[F]{H_{j-1}'-H_{j-1}}$ $+ b_{j-1}$ where $a_{j-1}=\norm[2]{\pg }\norm[2]{W_j+\dww{j}}$ and $b_{j-1}=\norm[2]{\pg }\norm[F]{H_{j-1}}\norm[2]{\dww{j}}$. With $H_0'-H_0=X-X=0$, we have,
	\begin{align}
		\norm[F]{H_j'-H_j} &\le \sum_{k=0}^{j-1}b_k \left(\prod_{i=k+1}^{j-1}a_i\right) \nonumber\\
		&= \sum_{k=0}^{j-1}\norm[2]{\pg }\norm[F]{H_{k}}\norm[2]{\dww{k+1}}\left(\prod_{i=k+1}^{j-1}\norm[2]{\pg }\norm[2]{W_{i+1}+\dww{i+1}}\right) \nonumber\\
		&= \sum_{k=0}^{j-1}\norm[2]{\pg }^{j-k}\norm[F]{H_k}\norm[2]{\dww{k+1}}\left(\prod_{i=k+2}^{j}\norm[2]{W_{i}+\dww{i}}\right). \nonumber
	\end{align}
	With \cref{eq:pf-gcn-pert-bound-1}, we have
	\begin{align}
		\norm[F]{H_j'-H_j} &\le \sum_{k=0}^{j-1}\norm[2]{\pg }^{j-k}\left(\norm[2]{\pg }^{k}\norm[F]{X}\prod_{i=1}^{k}\norm[2]{W_i}\right)\norm[2]{\dww{k+1}}\left(\prod_{i=k+2}^{j}\norm[2]{W_{i}+\dww{i}}\right) \nonumber\\
		&\le \norm[F]{X}\sum_{k=0}^{j-1}\norm[2]{\pg }^{j}\left(\prod_{i=1}^{k}\norm[2]{W_i}\right)\norm[2]{\dww{k+1}}\left(\prod_{i=k+2}^{j}\left(1+\frac{1}{l}\right)\norm[2]{W_{i}}\right) \nonumber\\
		&= \norm[F]{X}\sum_{k=0}^{j-1}\norm[2]{\pg }^{j}\left(\prod_{i=1}^{k+1}\norm[2]{W_i}\right)\frac{\norm[2]{\dww{k+1}}}{\norm[2]{W_{k+1}}}\left(\prod_{i=k+2}^{j}\left(1+\frac{1}{l}\right)\norm[2]{W_{i}}\right) \nonumber\\
		&=\norm[F]{X}\norm[2]{\pg }^{j}\left(\prod_{i=1}^{j}\norm[2]{W_i}\right)\sum_{k=0}^{j-1}\frac{\norm[2]{\dww{k+1}}}{\norm[2]{W_{k+1}}}\left(1+\frac{1}{l}\right)^{j-k-1} \nonumber\\
		&= \norm[F]{X}\norm[2]{\pg }^{j}\left(\prod_{i=1}^{j}\norm[2]{W_i}\right)\sum_{k=1}^{j}\frac{\norm[2]{\dww{k}}}{\norm[2]{W_{k}}}\left(1+\frac{1}{l}\right)^{j-k}, \label{eq:pf-gcn-pert-bound-3}
	\end{align}
	where the last inequality holds since $\norm[2]{W_i+\dww{i}}\le\norm[2]{W_i}+\norm[2]{\dww{i}}\le \left(1+1/l\right)\norm[2]{W_i}$ by the assumption that $\norm[2]{\dww{i}}\le\norm[2]{W_i}/l$.
	
	\textbf{Final Bound on the Readout Layer. } 
	\begin{align}
		\norm[2]{H_{l}'-H_{l}} &= \norm[2]{\frac{1}{n}\bm{1}_nH_{l-1}'(W_l+\dww{l}) - \frac{1}{n}\bm{1}_nH_{l-1}W_l} \nonumber\\
		&= \norm[2]{\frac{1}{n}\bm{1}_n \left(H_{l-1}'-H_{l-1}\right)(W_l+\dww{l}) + \frac{1}{n}\bm{1}_n H_{l-1}\dww{l}} \nonumber\\
		&\le \norm[2]{\frac{1}{n}\bm{1}_n \left(H_{l-1}'-H_{l-1}\right)(W_l+\dww{l})} + \norm[2]{\frac{1}{n}\bm{1}_n H_{l-1}\dww{l}} \nonumber\\
		&\le \norm[2]{\frac{1}{n}\bm{1}_{n}}\norm[2]{\left(H_{l-1}'-H_{l-1}\right)\left(W_l+\dww{l}\right)} + \norm[2]{\frac{1}{n}\bm{1}_{n}}\norm[2]{H_{l-1}\dww{l}} \nonumber\\
		&\le \frac{1}{\sqrt{n}}\norm[F]{H_{l-1}'-H_{l-1}}\norm[2]{W_l+\dww{l}} + \frac{1}{\sqrt{n}}\norm[F]{H_{l-1}}\norm[2]{\dww{l}}. \nonumber
	\end{align}
	Using \cref{eq:pf-gcn-pert-bound-1,eq:pf-gcn-pert-bound-3}, we have
	\begin{align}
		\norm[2]{H_{l}'-H_{l}}
		&\le \frac{1}{\sqrt{n}}\norm[2]{W_l+\dww{l}}\norm[F]{X}\norm[2]{\pg }^{l-1}\left(\prod_{i=1}^{l-1}\norm[2]{W_i}\right)\sum_{k=1}^{l-1}\frac{\norm[2]{\dww{k}}}{\norm[2]{W_{k}}}\left(1+\frac{1}{l}\right)^{l-k-1} \nonumber\\
		&\quad + \frac{1}{\sqrt{n}}\norm[2]{\dww{l}}\norm[2]{\pg }^{l-1}\norm[F]{X}\prod_{i=1}^{l-1}\norm[2]{W_i} \nonumber\\
		&\le \frac{1}{\sqrt{n}}\norm[F]{X}\norm[2]{\pg }^{l-1}\left[
			\norm[2]{\dww{l}}\prod_{i=1}^{l-1}\norm[2]{W_i} \right.\nonumber\\
			&\quad \left.
			+ \norm[2]{W_l+\dww{l}}\left(\prod_{i=1}^{l-1}\norm[2]{W_i}\right)\sum_{k=1}^{l-1}\frac{\norm[2]{\dww{k}}}{\norm[2]{W_{k}}}\left(1+\frac{1}{l}\right)^{l-k-1}
		\right] \nonumber\\
		&= \frac{1}{\sqrt{n}}\norm[F]{X}\norm[2]{\pg }^{l-1}\left(\prod_{i=1}^{l}\norm[2]{W_i}\right)\left[\frac{\norm[2]{\dww{l}}}{\norm[2]{W_l}} \right.\nonumber\\
		&\quad \left.
		+ \frac{\norm[2]{W_l+\dww{l}}}{\norm[2]{W_l}}\sum_{k=1}^{l-1}\frac{\norm[2]{\dww{k}}}{\norm[2]{W_{k}}}\left(1+\frac{1}{l}\right)^{l-k-1} \right] \nonumber\\
		&\le \frac{1}{\sqrt{n}}\norm[F]{X}\norm[2]{\pg }^{l-1}\left(\prod_{i=1}^{l}\norm[2]{W_i}\right)\left[\frac{\norm[2]{\dww{l}}}{\norm[2]{W_l}} \right. \nonumber\\
		&\quad \left. + \left(1+\frac{1}{l}\right)\sum_{k=1}^{l-1}\frac{\norm[2]{\dww{k}}}{\norm[2]{W_{k}}}\left(1+\frac{1}{l}\right)^{l-k-1}\right] \nonumber\\
		&= \frac{1}{\sqrt{n}}\norm[F]{X}\norm[2]{\pg }^{l-1}\left(\prod_{i=1}^{l}\norm[2]{W_i}\right)\left(1+\frac{1}{l}\right)^{l}\left[ \frac{\norm[2]{\dww{l}}}{\norm[2]{W_l}}\left(1+\frac{1}{l}\right)^{-l} \right. \nonumber\\
		&\quad \left. + \sum_{k=1}^{l-1}\frac{\norm[2]{\dww{k}}}{\norm[2]{W_{k}}}\left(1+\frac{1}{l}\right)^{-k}\right] \nonumber\\
		&\le \frac{e}{\sqrt{n}}\norm[F]{X}\norm[2]{\pg}^{l-1}\left(\prod_{i=1}^{l}\norm[2]{W_i}\right)\sum_{k=1}^{l}\frac{\norm[2]{\dww{k}}}{\norm[2]{W_k}}, \nonumber
	\end{align}
	where the last inequality holds since $1\le\left(1+1/l\right)^{l}\le e$.
\end{proof}

The second step is to bound the change in the robust margin operator using \cref{lem:pert-f-gcn}.
\begin{lemma}
	\label{lem:pert-gcn}
	Under Assumptions~\ref{assumption:graph} and~\ref{assumption:hidden-width}, for any $w$ and any perturbation $\dw=vec\left(\left\{\dww{i}\right\}_{i=1}^{l}\right)$ such that for all $i\in[l]$, $\norm[2]{\dww{i}}\le\norm[2]{W_i}/l$,	the change in the margin operator is bounded as 
	\begin{align}
		\left|\mar{\fw{w+\dw}}{i,j}-\mar{\fw{w}}{i,j}\right| \le 2eB\left(\prod_{i=1}^{l}\norm[2]{W_i}\right)\sum_{i=1}^{l}\frac{\norm[2]{\dww{i}}}{\norm[2]{W_i}}. \nonumber
	\end{align}
	Further, consider the $\epsilon$-attack ($\epsilon>0$), the change in the robust margin operator is bounded as 
	\begin{align}
		\left|\mar[RM]{\fw{w+\dw}}{i,j}-\mar[RM]{\fw{w}}{i,j}\right| \le 2e(B+\epsilon)\left(\prod_{i=1}^{l}\norm[2]{W_i}\right)\sum_{i=1}^{l}\frac{\norm[2]{\dww{i}}}{\norm[2]{W_i}}. \nonumber
	\end{align}
\end{lemma}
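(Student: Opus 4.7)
The plan is to reduce both bounds to an application of \cref{lem:pert-f-gcn}, using only elementary manipulations of the margin operator and of the infimum that defines the robust margin.

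For the first (non-adversarial) claim, I start from the definition $M(f_w(G),i,j)=f_w(G)_i-f_w(G)_j$. Subtracting the perturbed and unperturbed margins and applying the triangle inequality gives
\begin{align}
\bigl|\mar{\fw{w+\dw}}{i,j}-\mar{\fw{w}}{i,j}\bigr|
&\le \bigl|f_{w+\dw}(G)_i-f_w(G)_i\bigr|+\bigl|f_{w+\dw}(G)_j-f_w(G)_j\bigr| \nonumber\\
&\le 2\,\norm[2]{f_{w+\dw}(G)-f_w(G)}. \nonumber
\end{align}
Plugging in the second inequality of \cref{lem:pert-f-gcn}, which already absorbs the factors $\norm[F]{X}/\sqrt{n}\le B$ (by \cref{assumption:graph}) and $\norm[2]{\pg}\le 1$ (since $\pg$ is the normalized Laplacian for GCN), immediately yields the claimed $2eB\bigl(\prod_i\norm[2]{W_i}\bigr)\sum_i\norm[2]{\dww{i}}/\norm[2]{W_i}$ bound.

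For the robust claim, I use the fact that under the $\epsilon$-attack the perturbation set $\delta(G)=\{G'=(X',A'):\norm[2]{X'-X}\le\epsilon\}$ does not depend on $w$, which is what allows the standard ``swap the infimum'' estimate
\begin{align}
\bigl|\mar[RM]{\fw{w+\dw}}{i,j}-\mar[RM]{\fw{w}}{i,j}\bigr|
\le \sup_{G'\in\delta(G)}\bigl|\mar{\fw[G']{w+\dw}}{i,j}-\mar{\fw[G']{w}}{i,j}\bigr|. \nonumber
\end{align}
I would justify this by picking, for each side, the attacked graph realizing (or nearly realizing) the corresponding infimum and estimating the two-sided difference as in the first part. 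Every $G'=(X',A')\in\delta(G)$ satisfies $\norm[2]{X'}\le\norm[2]{X}+\epsilon\le B+\epsilon$, and its diffusion matrix $P_{G'}$ is again a normalized Laplacian, so $\norm[2]{P_{G'}}\le 1$. Hence \cref{lem:pert-f-gcn} applied to $G'$ gives $\norm[2]{f_{w+\dw}(G')-f_w(G')}\le e(B+\epsilon)\bigl(\prod_i\norm[2]{W_i}\bigr)\sum_i\norm[2]{\dww{i}}/\norm[2]{W_i}$ uniformly in $G'\in\delta(G)$, and combining with the triangle inequality from the first part yields the factor of two.

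There is no serious obstacle here; the main point to be careful about is the swap-of-infimum step. I would handle it cleanly by writing $\mar[RM]{\fw{w}}{i,j}-\mar[RM]{\fw{w+\dw}}{i,j}=\inf_{G'}\mar{\fw[G']{w}}{i,j}-\inf_{G''}\mar{\fw[G'']{w+\dw}}{i,j}$, upper bounding the first infimum by the value at $G''$ (the minimizer for $w+\dw$), and reducing to a pointwise bound on $|M(f_w(G''),i,j)-M(f_{w+\dw}(G''),i,j)|$; the reverse direction is symmetric. The hypothesis $\norm[2]{\dww{i}}\le\norm[2]{W_i}/l$ is inherited directly from the scope of \cref{lem:pert-f-gcn}, so no extra work is needed.
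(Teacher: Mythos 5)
Your proposal is correct and follows essentially the same route as the paper: the triangle inequality reduces the margin change to $2\norm[2]{f_{w+\dw}(G)-f_w(G)}$, which is controlled by \cref{lem:pert-f-gcn}, and the robust case is handled by the standard infimum-comparison over the ($w$-independent) attack set together with the uniform bound $\norm[2]{X'}\le B+\epsilon$. The only cosmetic difference is that you pass to a supremum over all $G'\in\delta(G)$, whereas the paper bounds by the maximum of the differences evaluated at the two minimizers $G(w)$ and $G(w+\dw)$; both rest on the same uniform pointwise estimate and yield the identical bound.
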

\begin{proof}
	For any $i,j\in[K]$, by the definition of margin operator, we have
	\begin{align}
		\left|\mar{\fw{w+\dw}}{i,j}-\mar{\fw{w}}{i,j}\right| &\le \left|\fw{w+\dw}_i-\fw{w}_i\right| + \left|\fw{w+\dw}_j-\fw{w}_j\right| \nonumber\\
		&\le 2\norm[2]{\fw{w+\dw}-\fw{w}}. \nonumber
	\end{align}
	By \cref{prop:graph-norm,lem:pert-f-gcn}, we have
	\begin{align}
		\left|\mar{\fw{w+\dw}}{i,j}-\mar{\fw{w}}{i,j}\right| &\le 2e\norm[2]{X}\norm[2]{\pg}^{l-1}\left(\prod_{i=1}^{l}\norm[2]{W_i}\right)\sum_{i=1}^{l}\frac{\norm[2]{\dww{i}}}{\norm[2]{W_i}} \nonumber\\
		&\le 2eB\left(\prod_{i=1}^{l}\norm[2]{W_i}\right)\sum_{i=1}^{l}\frac{\norm[2]{\dww{i}}}{\norm[2]{W_i}}. \nonumber
	\end{align}
	Further, let
	\begin{align}
		G(w) &= \mathop{\arg\inf}_{G'\in\delta_w(G)}\left(\fw[G']{w}_i - \fw[G']{w}_j\right) = \mathop{\arg\inf}_{G'\in\delta_w(G)} \mar{\fw[G']{w}}{i,j}, \nonumber
	\end{align}
	and $X(w)$ is the corresponding node features matrix. Denoting $\tilde{w}=w+\dw$, we have
	\begin{align}
		&\quad\, \left|\mar[RM]{\fw{\tilde{w}}}{i,j}-\mar[RM]{\fw{w}}{i,j}\right| \nonumber\\
		&= \left|\inf_{G'\in\delta_{\tilde{w}}(G)}\mar{\fw[G']{\tilde{w}}}{i,j} - \inf_{G'\in\delta_{w}(G)}\mar{\fw[G']{w}}{i,j}\right| \nonumber\\
		&= \left|\mar{\fw[G(\tilde{w})]{\tilde{w}}}{i,j} - \mar{\fw[G(w)]{w}}{i,j}\right| \nonumber\\
		&\le \max\left\{\left|\mar{\fw[G(\tilde{w})]{\tilde{w}}}{i,j} - \mar{\fw[G(\tilde{w})]{w}}{i,j}\right|,\right. \nonumber\\
		&\qquad\qquad \left.\left|\mar{\fw[G(w)]{\tilde{w}}}{i,j} - \mar{\fw[G(w)]{w}}{i,j}\right|\right\} \nonumber\\
		&\le 2e\max\left\{\norm[2]{X(\tilde{w})}, \norm[2]{X(w)}\right\}\left(\prod_{i=1}^{l}\norm[2]{W_i}\right)\sum_{i=1}^{l}\frac{\norm[2]{\dww{i}}}{\norm[2]{W_i}} \nonumber\\
		&\le 2e(B+\epsilon)\left(\prod_{i=1}^{l}\norm[2]{W_i}\right)\sum_{i=1}^{l}\frac{\norm[2]{\dww{i}}}{\norm[2]{W_i}}, \nonumber
	\end{align}
	where the first inequality holds since, by the definition of $G(w)$,
	\begin{align}
		& \mar{\fw[G(\tilde{w})]{\tilde{w}}}{i,j} - \mar{\fw[G(w)]{w}}{i,j}\le \mar{\fw[G(w)]{\tilde{w}}}{i,j} - \mar{\fw[G(w)]{w}}{i,j}, \nonumber\\
		& \mar{\fw[G(w)]{w}}{i,j} - \mar{\fw[G(\tilde{w})]{\tilde{w}}}{i,j} \le \mar{\fw[G(\tilde{w})]{w}}{i,j} - \mar{\fw[G(\tilde{w})]{\tilde{w}}}{i,j}, \nonumber
	\end{align}
	and the last inequality holds from Assumption~\ref{assumption:graph} and the definition of $\epsilon$-attack.
\end{proof}

Finally, the key to prove \cref{thm:adv-gen-gcn} using \cref{lem:pert-kl-bound,lem:pert-gcn}, is to construct suitable prior and posterior distributions satisfying the conditions in the lemmas. 

\begin{proof}[Proof of \cref{thm:adv-gen-gcn}]
	Let $\beta=\left(\prod_{i=1}^{l}\norm[2]{W_i}\right)^{1/l}$. We normalize the weights as $\tilde{W}_i=\beta W_i/\norm[2]{W_i}$. Due to the homogeneity of ReLU, we have $f_w=f_{\tilde{w}}$. We can also verify that $\prod_{i=1}^{l}\norm[2]{W_i} = \prod_{i=1}^{l}\norm[2]{\tilde{W}_i}$ and $\norm[F]{W_i}/\norm[2]{W_i} = \norm[F]{\tilde{W}_i}/\norm[2]{\tilde{W}_i}$, i.e., the terms appear in the bound stay the same after applying the normalization. Therefore, without loss of generality, we assume that the norm is equal across layers, i.e., $\forall i, \norm[2]{W_i}=\beta$.
	
	\noindent Consider the prior distribution $P = \N(0,\sigma^2I)$ and the random perturbation $\dw\sim \N(0,\sigma^2I)$. Note that the $\sigma$ of the prior and the perturbation distributions are the same and will be set according to $\beta$. More precisely, we will set the $\sigma$ based on some approximation $\tilde{\beta}$ of $\beta$ since the prior distribution  $P$ cannot depend on any learned weights directly. The approximation $\tilde{\beta}$ is chosen from a covering set which covers the meaningful range of $\beta$. For now, let us assume that we have a fixed $\tilde{\beta}$ and consider $\beta$ which satisfies $|\beta-\tilde{\beta}|\le\beta/l$. Note that 
	\begin{align}
		|\beta-\tilde{\beta}|\le\frac{1}{l}\beta &\Rightarrow \left(1-\frac{1}{l}\right)\beta\le \tilde{\beta}\le \left(1+\frac{1}{l}\right)\beta \nonumber\\
		&\Rightarrow \left(1-\frac{1}{l}\right)^{l-1} \beta^{l-1} \le \tilde{\beta}^{l-1} \le \left(1+\frac{1}{l}\right)^{l-1} \beta^{l-1} \nonumber\\
		&\Rightarrow \left(1-\frac{1}{l}\right)^{l-1} \beta^{l-1} \le \tilde{\beta}^{l-1} \le \left(1+\frac{1}{l}\right)^{l} \beta^{l-1} \nonumber\\
		&\Rightarrow \frac{1}{e}\beta^{l-1} \le \tilde{\beta}^{l-1} \le e\beta^{l-1}, \nonumber
	\end{align}
	where the last two inequalities hold since $\left(1-1/l\right)^{l-1}\ge1/e$ and $\left(1+1/l\right)^{l}\le e$. Indeed, using inequalities $\ln(x)\le x-1, x>0$ and $\ln(x)\ge 1-1/x,x>0$, we can show that $l\ln\left(1+1/l\right)\le 1$ and $(l-1)\ln\left(1-1/l\right)\ge (l-1)\left(1-\frac{1}{1-\frac{1}{l}}\right)=-1$.
	
	From \citep{tropp2012user}, for $\dww{i}\in\R^{h_{i-1}\times h_{i}}$ and $\dww{i}\sim\N(0,\sigma^2I)$, we have
	\begin{align}
		\Prob\left\{\norm[2]{\dww{i}}\ge t\right\} \le (h_{i-1}+h_{i})\exp\left(-\frac{t^2}{2\sigma^2\max(h_{i-1},h_{i})}\right) \le 2h\exp\left(-\frac{t^2}{2h\sigma^2}\right). \nonumber
	\end{align}
	Taking a union bound, we have
	\begin{align}
		\Prob\left(\bigcap_{i=1}^{l}\left\{\norm[2]{\dww{i}}<t\right\}\right) &= 1-\Prob\left(\bigcup_{i=1}^{l}\left\{\norm[2]{\dww{i}}\ge t\right\}\right) \nonumber\\
		&\ge 1 - \sum_{i=1}^{l}\Prob\left\{\norm[2]{\dww{i}}\ge t\right\} \nonumber\\
		&\ge 1 - 2lh\exp\left(-\frac{t^2}{2h\sigma^2}\right). \nonumber
	\end{align}
	Setting $2lh\exp\left(-\frac{t^2}{2h\sigma^2}\right)=\frac{1}{2}$, we have $t=\sigma\sqrt{2h\log(4lh)}$. This implies that
	\begin{align}
		\Prob\left(\bigcap_{i=1}^{l}\left\{\norm[2]{\dww{i}}<\sigma\sqrt{2h\log(4lh)}\right\}\right) &\ge \frac{1}{2}. \nonumber
	\end{align}
	Plugging this bound into \cref{lem:pert-gcn}, we have that with probability at least $1/2$ over $\dw$,
	\begin{align}
		\left|\mar[R]{\fw{w+\dw}}{i,j} - \mar[R]{\fw{w}}{i,j}\right| &\le 2e(B+\epsilon) \left(\prod_{i=1}^{l}\norm[2]{W_i}\right) \sum_{i=1}^{l}\frac{\norm[2]{\dww{i}}}{\norm[2]{W_i}} \nonumber\\
		&= 2e(B+\epsilon)\beta^{l}\sum_{i=1}^{l}\frac{\norm[2]{\dww{i}}}{\beta} \nonumber\\
		&\le 2e(B+\epsilon)\beta^{l-1}l\sigma\sqrt{2h\log(4lh)} \nonumber\\
		&\le 2e^2(B+\epsilon)\tilde{\beta}^{l-1}l\sigma\sqrt{2h\log(4lh)} \le \frac{\gamma}{2}, \nonumber
	\end{align}
	where we set $\sigma=\frac{\gamma}{4e^2(B+\epsilon)\tilde{\beta}^{l-1}l\sqrt{2h\log(4lh)}}$ to get the last inequality. Note that \cref{lem:pert-gcn} also requires $\norm[2]{\dww{i}}\le \norm[2]{W_i}/l, \forall i\in[l]$. The requirement is satisfied if $\sigma\le \frac{\beta}{l\sqrt{2h\log(4lh)}}$, which in turn can be satisfied if 
	\begin{align}
		\frac{\gamma}{4e^2(B+\epsilon)\tilde{\beta}^{l-1}l\sqrt{2h\log(4lh)}} \le \frac{\beta}{l\sqrt{2h\log(4lh)}}. \label{eq:pf-gcn-bound-1}
	\end{align}
	Note that $\frac{\gamma}{4e(B+\epsilon)}\le \beta^{l}$ is a sufficient condition of \cref{eq:pf-gcn-bound-1}. We will see how to satisfy this condition later.
	
	We now compute the KL term in the PAC-Bayes bound in \cref{lem:pert-kl-bound}. As defined above, for a fixed $w$, the prior distribution $P=\N(0,\sigma^2I)$, the posterior distribution $Q$ is the distribution of $\dw$ shifted by $w$, that is, $Q=\N(w,\sigma^2I)$. Then we have,
	\begin{align}
		\kl{Q} &= \frac{1}{2} w\T\left(\sigma^2I\right)\inv w =
		\frac{\norm[2]{w}^2}{2\sigma^2} \nonumber\\
		&= \frac{16e^4(B+\epsilon)^2 \tilde{\beta}^{2l-2}l^22h\log(4lh)}{2\gamma^2} \sum_{i=1}^{l}\norm[F]{W_i}^2 \nonumber\\
		&\le \cO\left(\frac{(B+\epsilon)^2 \beta^{2l}l^2h\log(lh)}{\gamma^2} \sum_{i=1}^{l}\frac{\norm[F]{W_i}^2}{\beta^2}\right) \nonumber\\
		&\le \cO\left((B+\epsilon)^2 l^2h\log(lh)\frac{\prod_{i=1}^{l}\norm[2]{W_i}^2}{\gamma^2} \sum_{i=1}^{l}\frac{\norm[F]{W_i}^2}{\norm[2]{W_i}^2}\right). \nonumber
	\end{align}
	From \cref{lem:pert-kl-bound}, fixing any $\tilde{\beta}$, with probability $1-\delta$ over $S$, for all $w$ such that $|\beta-\tilde{\beta}|\le\beta/l$, we have,
	\begin{align}
		\ml{R}{0} \le \eml{R}{\gamma} + \cO\left(\sqrt{\frac{(B+\epsilon)^2 l^2 h \log(lh)\prod_{i=1}^{l}\norm[2]{W_i}^2\sum_{i=1}^{l}\frac{\norm[F]{W_i}^2}{\norm[2]{W_i}^2} + \log(m/\delta)}{\gamma^2 m}}\right). \label{eq:pf-gcn-bound-2}
	\end{align}
	
	\noindent Finally, we need to consider multiple choices of $\tilde{\beta}$ so that for any $\beta$, we can bound the generalization error like \cref{eq:pf-gcn-bound-2}. First, we only need to consider values of $\beta$ in the following range,
	\begin{align}
		\left(\frac{\gamma}{2(B+\epsilon)}\right)^{1/l} \le \beta \le \left(\frac{\gamma\sqrt{m}}{2(B+\epsilon) }\right)^{1/l}, \label{eq:pf-gcn-bound-3}
	\end{align}
	since otherwise the bound holds trivially as $\ml{R}{0}\le 1$ by definition. If $\beta^{l}<\frac{\gamma}{2(B+\epsilon)}$, then for any $G$ and any $i, j\in[K]$, with \cref{eq:pf-gcn-pert-bound-1},
	\begin{align}
		\left|\mar[RM]{\fw{w}}{i,j}\right| &= \left|\inf_{G'\in\delta_{w}(G)}\mar{\fw[G']{w}}{i,j}\right| = \left|\mar{\fw[\hat{G}]{w}}{i,j}\right| \nonumber\\
		&= \left|\fw[\hat{G}]{w}_{i} -\fw[\hat{G}]{w}_{j}\right| \le 2\left|\fw[\hat{G}]{w}_{i}\right| \nonumber\\
		&\le 2\norm[2]{\fw[\hat{G}]{w}} = 2\norm[2]{\frac{1}{n}\bm{1}_n H_{l-1}^*W_l} \le 2\norm[2]{\frac{1}{n}\bm{1}_n} \norm[F]{H_{l-1}^*W_l} \nonumber\\
		&\le \frac{2}{\sqrt{n}}\norm[F]{H_{l-1}^*}\norm[2]{W_l} \le 2(B+\epsilon) \prod_{i=1}^{l}\norm[2]{W_i} \nonumber\\
		&= 2(B+\epsilon)\beta^{l} < 2(B+\epsilon) \frac{\gamma}{2(B+\epsilon)} = \gamma, \nonumber
	\end{align}
	where $\hat{G}=\arg\inf_{G'\in\delta_{w}(G)}\mar{\fw[G']{w}}{i,j}$ and $H_{l-1}^*$ is the corresponding node representation.
	Therefore, by the definition, we always have $\eml{R}{\gamma}=1$ when $\beta<\left(\frac{\gamma}{2(B+\epsilon)}\right)^{1/l}$.
	
	\noindent Alternatively, if $\beta^{l}>\frac{\gamma\sqrt{m}}{2(B+\epsilon)}$, the term inside the big-O notation in \cref{eq:pf-gcn-bound-2} would be
	\begin{align}
		\sqrt{\frac{(B+\epsilon)^2 l^2 h \log(lh)\prod\limits_{i=1}^{l}\norm[2]{W_i}^2\sum\limits_{i=1}^{l}\dfrac{\norm[F]{W_i}^2}{\norm[2]{W_i}^2} + \log(m/\delta)}{\gamma^2 m}} &\ge \sqrt{\frac{l^2h\log(lh)}{4}\sum_{i=1}^{l}\dfrac{\norm[F]{W_i}^2}{\norm[2]{W_i}^2}} \nonumber\\
		&\ge \sqrt{\frac{l^2h\log(lh)}{4}} \ge 1, \nonumber
	\end{align}	
	where we use the facts that $\norm[F]{W_i}\ge\norm[2]{W_i}$ and we typically choose $h\ge2$ and $l\ge2$ in practice. Note that the lower bound in \cref{eq:pf-gcn-bound-3} ensures that \cref{eq:pf-gcn-bound-1} holds which in turn justifies the applicability of \cref{lem:pert-gcn}.
	\hspace*{\fill}\\
	
	Consider a covering set centered at $C=\left\{\tilde{\beta}_1,\dots,\tilde{\beta}_{|C|}\right\}$ of the interval in \cref{eq:pf-gcn-bound-3} with radius $\frac{1}{l}\left(\frac{\gamma}{2(B+\epsilon)}\right)^{1/l}$, where $|C|$ is the size of the set. To make sure that $|\beta-\tilde{\beta}|\le\left(\frac{\gamma}{2(B+\epsilon)}\right)^{1/l}/l\le\beta/l$, the minimal size of $C$ should be
	\begin{align}
		|C| = \frac{\left(\frac{\gamma\sqrt{m}}{2(B+\epsilon) }\right)^{1/l} - \left(\frac{\gamma}{2(B+\epsilon)}\right)^{1/l}}{\frac{2}{l}\left(\frac{\gamma}{2(B+\epsilon)}\right)^{1/l}} = \frac{l}{2}\left(m^{\frac{1}{2l}}-1\right). \nonumber
	\end{align}
	
	For a $\tilde{\beta}_i\in C$, we denote the event that \cref{eq:pf-gcn-bound-2} holds for $|\beta-\tilde{\beta}|\le\beta/l$ as $E_i$. Therefore, the probability of the event that \cref{eq:pf-gcn-bound-2} holds for any $\beta$ satisfying \cref{eq:pf-gcn-bound-3} is
	\begin{align}
		\Prob\left(\bigcap_{i=1}^{|C|}E_i\right) = 1-\Prob\left(\bigcup_{i=1}^{|C|}\bar{E}_i\right) \ge 1-\sum_{i=1}^{|C|}\Prob\left\{\bar{E}_i\right\} \ge 1-|C|\delta, \nonumber
	\end{align}
	where $\bar{E}_i$ denotes the complement of $E_i$. Hence, let $\delta'=|C|\delta$, then we have that, with probability $1-\delta'$ and for all $w$,
	\begin{align}
		\ml{L}{0} &\le \eml{L}{\gamma} + \cO\left(\sqrt{\frac{(B+\epsilon)^2 l^2 h \log(lh)\prod_{i=1}^{l}\norm[2]{W_i}^2\sum_{i=1}^{l}\frac{\norm[F]{W_i}^2}{\norm[2]{W_i}^2} + \log(m/\delta)}{\gamma^2 m}}\right) \nonumber\\
		&= \eml{L}{\gamma} + \cO\left(\sqrt{\frac{(B+\epsilon)^2 l^2 h \log(lh)\prod_{i=1}^{l}\norm[2]{W_i}^2\sum_{i=1}^{l}\frac{\norm[F]{W_i}^2}{\norm[2]{W_i}^2} + \log(ml/\delta')}{\gamma^2 m}}\right). \nonumber
	\end{align}

\end{proof}

\subsection{Generalization Bounds for MPGNN}
In this part, we provide the adversarially robust generalization bounds for MPGNN.

\begin{theorem}[Robust Generalization Bounds for MPGNN]
	\label{thm:adv-gen-mpgnn}
	Under Assumptions~\ref{assumption:graph} to~\ref{assumption:mpgnn}, consider the $\epsilon$-attack ($\epsilon>0$). For any $\delta,\gamma>0$, with probability at least $1-\delta$ over the choice of a training set $S=\{z_1,\dots,z_m\}$ independently sampled from $\D$, for any $w$, the following results hold: \\ if $\tau=1$, then
	\begin{align}
		& \ml{R}{0} \le \eml{R}{\gamma} + \cO\left(\sqrt{\frac{(l+1)^4 \Psi_0\Psi_1(f_w) + \log(m/\delta)}{\gamma^2 m}}\right), \nonumber
	\end{align}
	if $\tau\ne1$, then
	\begin{align}
		& \ml{R}{0} \le \eml{R}{\gamma} + \cO\left(\sqrt{\frac{l^2 \Psi_0\Psi_2(f_w) + \log(m(l+1)/\delta)}{\gamma^2 m}}\right). \nonumber
	\end{align}
	Here, $\tau=L^3M_2\max_{\D}\{\norm[2]{\pg}\}$, $\xi=LM_1M_2\frac{\tau^{l-1}-1}{\tau-1},$
	$\zeta=\min\left(\left\{\norm[2]{W_k}\right\}_{k=1}^{l}, \left\{\norm[2]{U_k}\right\}_{k=1}^{l-1}\right)$, $|w|_2^2=\sum_{i=1}^{l}\norm[F]{W_i}^2+\sum_{i=1}^{l-1}\norm[F]{U_i}^2$, $\Psi_0 = (B+\epsilon)^2 h \log(lh),$
	\begin{align}
		\Psi_1(f_w) &= \max\left\{\zeta^{-6},(LM_1M_2)^{3}\right\}|w|_2^2, \nonumber\\
		\Psi_2(f_w) &= \max\left\{\zeta\inv,\xi^{1/l}\right\}^{2l+2}|w|_2^2. \nonumber
	\end{align}
\end{theorem}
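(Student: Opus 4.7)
The plan is to reuse the four-step strategy that proved \cref{thm:adv-gen-gcn}: (i) an MPGNN output-perturbation lemma, (ii) transfer to the robust margin operator via the argument of \cref{lem:pert-gcn}, (iii) application of \cref{lem:pert-kl-bound} with Gaussian prior and posterior, (iv) a covering argument over the parameter norms. The key departure from the GCN case is that the activations $\phi_k,\rho_k,\psi_k$ need not be positively homogeneous, so the normalisation trick that reduced GCN to a single scalar $\beta$ is unavailable; this is exactly why the bound depends on the new scalars $\zeta$ and $\xi$, and why the case split $\tau=1$ versus $\tau\neq 1$ appears.

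First I would prove an analogue of \cref{lem:pert-f-gcn}. The recursion $H_k=\phi_k(XU_k+\rho_k(\pg\psi_k(H_{k-1})W_k))$, together with the $L$-Lipschitz and zero-centred hypotheses of \cref{assumption:mpgnn} and the Frobenius-norm sub-multiplicativity used in the GCN proof, yields $\norm[F]{H_k}\le LM_1\norm[F]{X}+\tau\norm[F]{H_{k-1}}$ with $\tau=L^3M_2\max_{\D}\norm[2]{\pg}$, hence $\norm[F]{H_k}\le LM_1\norm[F]{X}\cdot(\tau^k-1)/(\tau-1)$ when $\tau\neq 1$ and the linear-in-$k$ bound when $\tau=1$. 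A second induction, expanding $H_k'-H_k$ as in \cref{eq:pf-gcn-pert-bound-2,eq:pf-gcn-pert-bound-3}, now with the additional additive $X\duu{k}$ term entering through $\phi_k$ and all three Lipschitz layers contributing a factor $L$ per step, produces a bound on $\norm[2]{\fw{w+\dw}-\fw{w}}$ of the form $(B+\epsilon)\,\Xi\sum_i\bigl(\norm[2]{\dww{i}}+\norm[2]{\duu{i}}\bigr)$, where $\Xi$ packages $L,M_1,M_2$ and the factor $\xi$ of the theorem. The proof of \cref{lem:pert-gcn} then transfers this verbatim to a bound on $|\mar[RM]{\fw{w+\dw}}{i,j}-\mar[RM]{\fw{w}}{i,j}|$, using \cref{assumption:graph} and the definition of the $\epsilon$-attack.

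For the PAC-Bayesian step I would take $P=\N(0,\sigma^2 I)$ and $\dw\sim\N(0,\sigma^2 I)$, applying the Gaussian spectral-norm tail bound jointly to the $2l-1$ matrices $\{\dww{i}\}\cup\{\duu{i}\}$, so that with probability at least $1/2$ every spectral norm lies below $\sigma\sqrt{2h\log(c\,lh)}$. Choosing $\sigma\asymp \gamma/((B+\epsilon)\,\Xi\, l\sqrt{h\log(lh)})$ keeps the margin change below $\gamma/2$, while the applicability constraints $\norm[2]{\dww{i}}\le\norm[2]{W_i}/l$ and $\norm[2]{\duu{i}}\le\norm[2]{U_i}/l$ force $\sigma\lesssim \zeta/(l\sqrt{h\log(lh)})$; it is this second constraint that injects the scale $\zeta\inv$ into $\Psi_1,\Psi_2$. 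The KL then evaluates to $\norm[2]{w}^2/(2\sigma^2)=|w|_2^2/(2\sigma^2)$, giving the $|w|_2^2$ factor, and combining with \cref{lem:pert-kl-bound} produces the stated form up to the covering cost.

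The covering is the hardest step and where the proof genuinely diverges from the GCN argument. Since homogeneity is unavailable, I cannot centre the cover on a single scalar $\beta$; instead I would discretise the scalar $\max\{\zeta\inv,\xi^{1/l}\}$ on a geometric grid with radius $1/l$ and take a union bound over the cover, trivially bounding $\ml{R}{0}\le 1$ outside the admissible range by the same argument as in \cref{eq:pf-gcn-bound-3}. The exponent $2l+2$ in $\Psi_2$ then arises as $l+1$ factors of $\max\{\zeta\inv,\xi^{1/l}\}$ squared through the KL. In the $\tau=1$ regime, $\xi$ collapses to $LM_1M_2(l-1)$, so $\xi^{1/l}$ stays bounded and the $(l-1)$ factor is absorbed into the leading $(l+1)^4$; only $\zeta$ needs discretising, and the exponent $2l+2$ simplifies to the constant $\max\{\zeta^{-6},(LM_1M_2)^3\}$ in $\Psi_1$. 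The main obstacle will be the bookkeeping that aligns these exponents exactly; once the perturbation lemma is tight, the remainder is a careful but mechanical repetition of the end of the proof of \cref{thm:adv-gen-gcn}, with the cover size contributing the $\log(m(l+1)/\delta)$ versus $\log(m/\delta)$ difference between the two cases.
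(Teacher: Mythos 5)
Your proposal is correct and follows essentially the same route as the paper: the same recursion $\norm[F]{H_k}\le LM_1\norm[F]{X}+\tau\norm[F]{H_{k-1}}$ driving the $\tau=1$ versus $\tau\neq 1$ split, the same transfer to the robust margin operator, the same Gaussian prior/posterior with a union bound over the $2l-1$ perturbation matrices, and the same covering of the scalar $\beta=\max\{\zeta^{-1},\xi^{1/l}\}$ (respectively $\max\{\zeta^{-1},\sqrt{LM_1M_2}\}$) over the admissible range. The only differences are cosmetic bookkeeping (the paper tracks the maximum relative perturbation $\eta$ rather than a sum of absolute norms, and uses an arithmetic cover of radius proportional to $1/(l+2)$), which do not affect the argument.
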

In the above results, the bounds vary for two typical settings of $\tau$. The generalization bound for $\tau \neq 1$ scales as $\cO\left(\norm[2]{\pg}^{l-2}l\sqrt{h\log(lh)}\right)$. In practice, $\pg$ is usually chosen as the adjacency matrix $A$ of the graph, where  $\|\pg\|$ is less than the maximum degree $d$ of the graph.

\subsection{Comparison} \label{sc:comparison}


We compare our results with some existing bounds in the standard settings.
For generalization bounds from both models, we will focus on the 
dependency on the depth $l$ and width $h$ as well as maximum degree $d$ of the graph. 

For GCN in standard settings, 
\citet{liao_pac-bayesian_2020} derive PAC-Bayesian generalization bounds of order $\cO\left(d^{\frac{l-1}{2}}l\sqrt{h \log(lh)}\right)$, under exactly the same set-up and assumptions in \cref{thm:gen-gcn} of this paper. Our  derived bounds from \cref{thm:gen-gcn} scale as $\cO\left(l \sqrt{h \log(lh)}\right)$. The latter is tighter than the former with a factor of the maximum degree $d$, which is usually much larger than $1$ in practice. 



For MPGNN in standard settings, the Rademacher complexity bound in \citep{garg_generalization_2020} scales as $\cO\left(h\sqrt{l\log(hd)}\right)$. For PAC-Bayesian framework, the bound in \citep{liao_pac-bayesian_2020} scales as $\cO\left(d^{l-2}l\sqrt{h\log(lh)}\right)$, and the bound in \citep{ju_generalization_2023} scales as $\cO\left(\norm[2]{\pg}^{l-1}l\sqrt{h}\right)$, while our bound scales as $\cO\left(\norm[2]{\pg}^{l-2}l\sqrt{h\log(lh)}\right)$. Note that \citep{ju_generalization_2023} requires  that the nonlinear activations and the loss functions are twice-differentiable, Lipschitz-continuous and their first-order
and second-order derivatives are both Lipschitz-continuous, which thus does not apply to the commonly used ReLU activation. 
When $\pg$ is chosen as the adjacency matrix $A$ of the graph, 
it has been shown that for any undirected graph, the spectral norm of $\pg$ is less than the maximum degree $d$. See \cref{pf:basic-results} for a proof.

We also compare our robust bounds with the bounds on DNN using PAC-Bayesian framework as presented in \citep{xiao_pac-bayesian_nodate}. Given that GCNs are essentially a specialized DNNs with a similar structure, our robust bounds align well with those in \citep{xiao_pac-bayesian_nodate}. 
For MPGNN, the robust bound for $\tau \neq 1$ scales as $\cO\left(\norm[2]{\pg}^{l-2}l\sqrt{h\log(lh)}\right)$, where $\norm{\pg}$ is less than or equal to $d$. Compared to DNNs, MPGNNs exhibit a more complex structure, and the graph structure significantly influences their performance. Consequently, our results include the additional term $d^l$, which is not present in the bounds for DNNs. 

\section{Discussion} \label{sc:discussion}

In this paper, we present adversarially robust generalization bounds for two kinds of GNNs, GCN and MPGNN. Our results in adversarial settings complement the bounds in standard settings. And our results for GCN improve the bounds in \citep{liao_pac-bayesian_2020} by avoiding the dependency on the maximum degree.

Our results are limited to graph classification problem, and cannot be generalized to node classification problem, in which the samples are not independent. We use Gaussian prior and posterior distributions in the analysis, while alternative
distributions could be considered. Additionally, in the analysis, we do not consider the optimization algorithm  that has an impact on the leaned parameters.
This paper leads to a few interesting  problems for future work: (i) Would the analysis still work for other kinds of GNNs, or  other kinds of networks? (ii) How the optimization algorithms like SGD impact the generalization of GNNs?


\section*{Acknowledgments}
This work was supported in part by the National Key Research and Development Program of China under grant number 2021YFA1003500.

\appendix
\section{Proofs} \label{sc:proofs}

\subsection{Basic Results} \label{pf:basic-results}
\begin{proposition}
	\label{prop:f-norm}
	For any matrix $A\in\R^{n\times m}, B\in\R^{m\times p}$, we have, 
	\begin{align}
		\norm[F]{AB} \le \norm[F]{A}\norm[2]{B}. \nonumber
	\end{align}
\end{proposition}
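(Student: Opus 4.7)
The plan is to prove this standard submultiplicative-type inequality by decomposing the Frobenius norm of the product row by row and then applying the definition of the spectral norm to each row. The underlying observation is that the Frobenius norm counts all entries, so it admits a direct row- (or column-) wise decomposition, while the spectral norm is exactly the operator norm on vectors, which lets us handle each row of $A$ separately.

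First, I would write
\begin{align}
    \norm[F]{AB}^2 = \sum_{i=1}^{n}\norm[2]{(AB)_{i,:}}^2, \nonumber
\end{align}
where $(AB)_{i,:}$ denotes the $i$-th row of $AB$. Next, since $(AB)_{i,:} = A_{i,:}B$, I would view this row as the transpose of $B\T A_{i,:}\T$ and bound its $2$-norm via the definition of the spectral norm:
\begin{align}
    \norm[2]{A_{i,:}B} = \norm[2]{B\T A_{i,:}\T} \le \norm[2]{B\T}\,\norm[2]{A_{i,:}\T} = \norm[2]{B}\,\norm[2]{A_{i,:}}. \nonumber
\end{align}
Plugging back into the row-wise decomposition yields
\begin{align}
    \norm[F]{AB}^2 \le \norm[2]{B}^2 \sum_{i=1}^{n}\norm[2]{A_{i,:}}^2 = \norm[2]{B}^2\,\norm[F]{A}^2, \nonumber
\end{align}
and taking square roots finishes the proof.

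There is no real obstacle here; the only thing to be careful about is which factor carries the Frobenius norm and which carries the spectral norm. The row-based decomposition above naturally puts $\norm[F]{\cdot}$ on $A$ and $\norm[2]{\cdot}$ on $B$, matching the statement of the proposition. (A symmetric column-based argument would instead give $\norm[F]{AB}\le\norm[2]{A}\norm[F]{B}$, which is a different, though also standard, inequality.)
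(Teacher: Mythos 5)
Your proof is correct and follows essentially the same route as the paper's: decompose $\norm[F]{AB}^2$ row by row, bound each row $A_{i,:}B$ by $\norm[2]{A_{i,:}}\norm[2]{B}$ via the operator-norm definition, and sum. The detour through the transpose $B\T A_{i,:}\T$ is a harmless cosmetic difference.
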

\begin{proof}
	Let $x_i\T, a_i\T, i\in[n]$ be the $i$-th row of $AB$ and $A$, respectively, then we have,
	\begin{align}
		\norm[F]{AB}^{2} &= \sum_{i=1}^{n}\norm[2]{x_i\T}^{2} = \sum_{i=1}^{n}\norm[2]{a_i\T B}^{2} \le \sum_{i=1}^{n}\norm[2]{a_i\T}^{2}\norm[2]{B}^{2} = \norm[F]{A}^{2}\norm[2]{B}^{2}. \qedhere \nonumber
	\end{align}
\end{proof}

\begin{proposition}
	\label{prop:graph-norm}
	For any undirected graph $G=(V,E)$ with $n$ nodes, let $A\in\R^{n\times n}$ be the adjacency matrix, $D=\mathrm{diag}(D_{1},\dots,D_{n})$ be the degree matrix, and $d=\max_{i\in[n]}\left\{D_{i}\right\}$ be the maximum degree, where $D_{i}=\sum_{j=1}^{n}A_{ij},i\in[n]$. Then we have,
	\begin{enumerate}[\quad(i)]
		\item $\norm[2]{A} \le d$;
		\item $\norm[2]{\tilde{D}^{-1/2}\tilde{A}\tilde{D}^{-1/2}} \le 1$.
	\end{enumerate}
	Here $\tilde{A}=A+I$ and $\tilde{D}=\mathrm{diag}\left(\tilde{D}_{1},\dots,\tilde{D}_{n}\right)=\mathrm{diag}\left(\sum_{j=1}^{n}\tilde{A}_{ij}, i\in[n]\right)$.
\end{proposition}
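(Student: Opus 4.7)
The plan is to prove both parts by reducing the spectral-norm bound to a standard quadratic-form estimate, exploiting the symmetry of the matrices involved.

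For part (i), since $A$ is symmetric with nonnegative entries and zero diagonal, I would invoke Gershgorin's circle theorem: every eigenvalue of $A$ lies in the disc centered at $A_{ii}=0$ with radius $\sum_{j\ne i} |A_{ij}| = D_{i} \le d$, so $\norm[2]{A} = \max_{i} |\lambda_{i}(A)| \le d$. Alternatively, one can use the interpolation bound $\norm[2]{A}^{2} \le \norm[1]{A}\,\norm[\infty]{A} = d \cdot d$, since for a symmetric nonnegative matrix both operator norms equal the maximum row sum, which is the maximum degree.

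For part (ii), let $L = \tilde{D}^{-1/2}\tilde{A}\tilde{D}^{-1/2}$, which is symmetric. I would show $\norm[2]{L} \le 1$ via the Rayleigh-quotient characterization $\norm[2]{L} = \sup_{x \ne 0} |x\T L x|/\norm[2]{x}^{2}$ combined with the change of variables $y = \tilde{D}^{-1/2} x$. Under this substitution, $x\T L x = y\T \tilde{A} y$ while $\norm[2]{x}^{2} = \sum_{i} \tilde{D}_{i} y_{i}^{2}$, so it suffices to establish $|y\T \tilde{A} y| \le \sum_{i} \tilde{D}_{i} y_{i}^{2}$ for every $y \in \R^{n}$. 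Expanding $y\T \tilde{A} y = \sum_{i,j} \tilde{A}_{ij} y_{i} y_{j}$, using nonnegativity of the entries $\tilde{A}_{ij}$, and applying the elementary bound $|y_{i} y_{j}| \le (y_{i}^{2} + y_{j}^{2})/2$ termwise, I then invoke the symmetry $\tilde{A}_{ij} = \tilde{A}_{ji}$ to collapse the resulting double sum into $\sum_{i,j} \tilde{A}_{ij} y_{i}^{2} = \sum_{i} \tilde{D}_{i} y_{i}^{2}$.

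No genuine obstacle arises; both claims are classical linear-algebraic facts about normalized adjacency matrices. The only point demanding care is the AM-GM step in (ii): the factor of $1/2$ from the inequality must cancel precisely against the factor of $2$ produced by combining the $(i,j)$- and $(j,i)$-contributions of the symmetric double sum, which is what makes the bound tight at $1$ rather than at $2$. Observe that the self-loops from the shift $A \mapsto \tilde{A}$ pose no difficulty, since they contribute equally to both sides of the inequality through $\tilde{D}_{i} = D_{i}+1$.
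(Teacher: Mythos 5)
Your proposal is correct. For part (ii) you are doing essentially what the paper does: the paper bounds the Rayleigh quotient $x\T \tilde{D}^{-1/2}\tilde{A}\tilde{D}^{-1/2}x = \sum_{(i,j)\in\tilde E} x_i x_j/\sqrt{\tilde D_i\tilde D_j}$ directly by the AM--GM estimate $x_ix_j/\sqrt{\tilde D_i\tilde D_j}\le x_i^2/(2\tilde D_i)+x_j^2/(2\tilde D_j)$ and then uses symmetry to collapse to $\sum_i x_i^2$; your change of variables $y=\tilde D^{-1/2}x$ merely moves the weights from the quadratic form to the norm before applying the identical AM--GM and symmetry steps, so the two arguments are the same computation in different coordinates (and your observation that the $1/2$ must cancel against the doubled $(i,j)$/$(j,i)$ contributions is exactly the point where the paper's proof is tight). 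For part (i) you genuinely diverge: the paper again uses the Rayleigh quotient with $x_ix_j\le(x_i^2+x_j^2)/2$ and the edge count per vertex, whereas you invoke Gershgorin's circle theorem (or the Schur/H\"older interpolation $\norm[2]{A}^2\le\norm[1]{A}\norm[\infty]{A}=d^2$). All three routes are standard and correct for a symmetric nonnegative matrix; the quadratic-form argument has the mild advantage of being uniform across both parts of the proposition, while Gershgorin gives (i) in one line and the interpolation bound would extend to non-symmetric $A$ with $d$ replaced by the geometric mean of the maximum row and column sums. One small caveat worth making explicit in your write-up of (ii): the substitution $y=\tilde D^{-1/2}x$ is legitimate because $\tilde D_i=D_i+1\ge 1>0$, which you implicitly use but should state.
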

\begin{proof}
	For (i), by the definition of spectral norm, we have
	\begin{align}
		\norm[2]{A} = \max_{\norm[2]{x}=1}x\T Ax = \max_{\norm[2]{x}=1}\sum_{(i,j)\in E}x_i x_j \le \max_{\norm[2]{x}=1}\sum_{(i,j)\in E}\frac{x_i^2+x_j^2}{2} \le \max_{\norm[2]{x}=1} d\sum_{i\in V}x_i^2 = d. \nonumber
	\end{align}
	For (ii), let $\tilde{E}=E\cup\{(i,i)|i\in V\}$ be the edge set associated with the adjacency matrix $\tilde{A}$. By the definition of spectral norm, we have
	\begin{align}
		\norm[2]{\tilde{D}^{-1/2}\tilde{A}\tilde{D}^{-1/2}} &= \max_{\norm[2]{x}=1}x\T \left(\tilde{D}^{-1/2}\tilde{A}\tilde{D}^{-1/2}\right)x = \max_{\norm[2]{x}=1}\sum_{(i,j)\in \tilde{E}}\frac{x_i x_j}{\sqrt{\tilde{D}_i \tilde{D}_j}} \nonumber\\
		&\le \max_{\norm[2]{x}=1}\sum_{(i,j)\in \tilde{E}}\left(\frac{x_i^2}{2\tilde{D}_i} + \frac{x_j^2}{2\tilde{D}_j}\right) \le \max_{\norm[2]{x}=1} \sum_{i\in V}x_i^2 = 1. \qedhere \nonumber
	\end{align}
\end{proof}

\subsection{Results of PAC-Bayesian Analysis}
\subsubsection{Proof of \cref{thm:mcallester}} \label{pf:thm-mcallester}
Following \citep{xiao_pac-bayesian_nodate}, we provide the proof for the PAC-Bayesian results with the robust margin loss.  First, we need the following lemma.
\begin{lemma}
	\label{lem:2-side}
	Let $X$ be a random variable satisfying $\Prob(X\ge\epsilon)\le e^{-2m\epsilon^{2}}$ and $\Prob(X\le-\epsilon)\le e^{-2m\epsilon^{2}}$, where $m\ge1$ and $\epsilon>0$, we have
	\begin{align}
		\E\left[e^{2(m-1)X^2}\right] \le 2m. \nonumber
	\end{align}
\end{lemma}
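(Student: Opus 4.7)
The plan is to bound the quantity $\E[e^{2(m-1)X^2}]$ by writing it as a tail integral and then applying the two given one-sided tail bounds after combining them into a bound on $\Prob(|X| \ge s)$. The argument is a clean layer-cake computation; the only mildly delicate point is that the exponent $-m/(m-1)$ coming from the change of variables needs to be strictly less than $-1$ so that the tail integral converges, which will be automatic for $m>1$, and the case $m=1$ is trivial since the left-hand side is just $1$.

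More concretely, first I would observe that $Y \coloneqq e^{2(m-1)X^2} \ge 1$ almost surely, and apply the standard identity
\begin{align*}
\E[Y] = \int_0^\infty \Prob(Y > t)\, dt = 1 + \int_1^\infty \Prob(Y > t)\, dt.
\end{align*}
Next, I would combine the two tail hypotheses via the union bound to obtain
\begin{align*}
\Prob(|X| \ge s) \le \Prob(X \ge s) + \Prob(X \le -s) \le 2e^{-2ms^2}, \qquad s>0.
\end{align*}
Unwinding $Y > t$ as $|X| > \sqrt{\log t/(2(m-1))}$ (valid for $t \ge 1$) and plugging in the tail bound gives
\begin{align*}
\Prob(Y > t) \le 2 \exp\!\left(-2m \cdot \frac{\log t}{2(m-1)}\right) = 2\, t^{-m/(m-1)}.
\end{align*}

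Finally I would evaluate the resulting power-law integral. Since $m/(m-1) > 1$ for $m > 1$, we have $\int_1^\infty t^{-m/(m-1)}\,dt = m-1$, so
\begin{align*}
\E[Y] \le 1 + 2(m-1) = 2m - 1 \le 2m,
\end{align*}
which is the claimed bound. The $m=1$ case is handled separately by noting $\E[e^{0}] = 1 \le 2$. The main (and only) obstacle is being careful with the boundary case and with the change-of-variable exponent; no probabilistic subtlety beyond a union bound is required.
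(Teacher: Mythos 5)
Your proof is correct and follows essentially the same route as the paper: the layer-cake identity for $\E[e^{2(m-1)X^2}]$, the tail hypotheses turned into a power-law bound $2t^{-m/(m-1)}$, and the convergent integral for $m>1$ with the $m=1$ case handled trivially. The only cosmetic difference is that you merge the two one-sided tails into a single bound on $\Prob(|X|\ge s)$ before integrating (yielding $2m-1$), whereas the paper integrates each tail separately and bounds each contribution by $m$; both give the claimed $2m$.
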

\begin{proof}
	If $m=1$, the inequality holds trivially. We consider $m>1$. By the definition of expectation, we have
	\begin{align}
		\E\left[e^{2(m-1)X^2}\right] &= \int_{0}^{\infty}\Prob\left(e^{2(m-1)X^2}\ge u\right)\id u \nonumber\\
		&= \int_{0}^{\infty}\Prob\left(X^2\ge \frac{\log u}{2(m-1)}\right)\id u \nonumber\\
		&= \int_{0}^{\infty}\Prob\left(X\ge \sqrt{\frac{\log u}{2(m-1)}}\right)\id u + \int_{0}^{\infty}\Prob\left(X\le -\sqrt{\frac{\log u}{2(m-1)}}\right)\id u. \nonumber
	\end{align}
	Then,
	\begin{align}
		\int_{0}^{\infty}\Prob\left(X\ge \sqrt{\frac{\log u}{2(m-1)}}\right)\id u 
		&= \int_{0}^{1}\Prob\left(X\ge \sqrt{\frac{\log u}{2(m-1)}}\right)\id u \nonumber\\
		&\quad + \int_{1}^{\infty}\Prob\left(X\ge \sqrt{\frac{\log u}{2(m-1)}}\right)\id u \nonumber\\
		&\le 1 + \int_{1}^{\infty}\Prob\left(X\ge \sqrt{\frac{\log u}{2(m-1)}}\right)\id u \nonumber\\
		&\le 1 + \int_{1}^{\infty}e^{-2m\frac{\log u}{2(m-1)}}\id u \nonumber\\
		&= 1 + \left(-(m-1)u^{-\frac{1}{m-1}}\Big|_{1}^{\infty}\right) \nonumber\\
		&= m. \nonumber
	\end{align}
	Similarly, we have
	\begin{align}
		\int_{0}^{\infty}\Prob\left(X\le -\sqrt{\frac{\log u}{2(m-1)}}\right)\id u &\le m. \nonumber
	\end{align}
	Combing the above analysis, we obtain the desired result. 
\end{proof}

\begin{proof}[Proof of \cref{thm:mcallester}]
	For any function $u(h)$ defined on $\cH$, we have
	\begin{align}
		\E_{h\sim Q}\left[u(h)\right] &= \E_{h\sim Q}\left[\log e^{u(h)}\right] \nonumber\\
		&= \E_{h\sim Q}\left[\log e^{u(h)} + \log\frac{Q}{P} + \log\frac{P}{Q}\right] \nonumber\\
		&= \kl{Q} + \E_{h\sim Q}\left[\log \left(\frac{P}{Q}e^{u(h)}\right)\right] \nonumber\\
		&\le \kl{Q} + \log\E_{h\sim Q}\left[\frac{P}{Q}e^{u(h)}\right] \qquad(\mbox{Jensen's inequality}) \nonumber\\
		&= \kl{Q} + \log\E_{h\sim P}\left[e^{u(h)}\right]. \nonumber
	\end{align}
	For $h\in\cH$, denote $\Delta(h) = \ml[h]{R}{\gamma} - \eml[h]{R}{\gamma}$ and $u(h)=2(m-1)\Delta(h)^2$. We have
	\begin{align}
		2(m-1)\left[\E_{h\sim Q}\left[\Delta(h)\right]\right]^2 &\le 2(m-1)\E_{h\sim Q}\left[\Delta(h)^2\right] \qquad(\mbox{Jensen's inequality}) \nonumber\\
		&\le \kl{Q} + \log\E_{h\sim P}\left[e^{2(m-1)\Delta(h)^2}\right]. \label{eq:pf-mcallester-two-side-1}
	\end{align}
	Since $\ml[h]{R}{\gamma}\in[0,1]$, based on Hoeffding's inequality, for any $\epsilon>0$, we have
	\begin{align}
		\Prob_{S}\left(\Delta(h)\ge \epsilon\right) \le e^{-2m\epsilon^2}, \nonumber\\
		\Prob_{S}\left(\Delta(h)\le -\epsilon\right) \le e^{-2m\epsilon^2}. \nonumber
	\end{align}
	Hence, based on \cref{lem:2-side}, we have
	\begin{align}
		\E_{S} \left[e^{2(m-1)\Delta(h)^2}\right]\le 2m\ &\Rightarrow\ \E_{h\sim P}\left[\E_{S}\left[e^{2(m-1)\Delta(h)^2}\right]\right] \le 2m \nonumber\\
		&\Leftrightarrow\ \E_{S}\left[\E_{h\sim P}\left[e^{2(m-1)\Delta(h)^2}\right]\right] \le 2m. \nonumber
	\end{align}
	Based on Markov's inequality, we have
	\begin{align}
		\Prob_{S}\left(\E_{h\sim P}\left[e^{2(m-1)\Delta(h)^2}\right]\ge\frac{2m}{\delta}\right) \le \frac{\delta\E_{S}\left[\E_{h\sim P}\left[e^{2(m-1)\Delta(h)^2}\right]\right]}{2m} \le \delta. \label{eq:pf-mcallester-two-side-2}
	\end{align}
	Combining \cref{eq:pf-mcallester-two-side-1,eq:pf-mcallester-two-side-2}, we have that with probability at least $1-\delta$ over $S$, 
	\begin{align}
		\ml[Q]{R}{\gamma} - \eml[Q]{R}{\gamma} &= \E_{h\sim Q}\left[\ml[h]{R}{\gamma} - \eml[h]{R}{\gamma}\right] \nonumber\\
		&\le \sqrt{\frac{\kl{Q} + \log\E_{h\sim P}\left[e^{2(m-1)\Delta(h)^2}\right]}{2(m-1)}} \nonumber\\
		&\le \sqrt{\frac{\kl{Q} + \log(2m/\delta)}{2(m-1)}}. \qedhere \nonumber
	\end{align}
\end{proof}

\subsubsection{Proof of \cref{lem:pert-kl-bound}} \label{pf:lem-pert-kl-bound}
\begin{proof}[Proof of \cref{lem:pert-kl-bound}]
	For any $w$ and random perturbation $\dw$, let $w'=w+\dw$. Let
	\begin{align}
		\mathcal{S}_{w} = \left\{w' \mid \max_{i,j\in[k],\atop G\in\X\times\G} \left|\mar[RM]{\fw{w'}}{i,j} - \mar[RM]{\fw{w}}{i,j}\right|\le \frac{\gamma}{2}\right\}. \nonumber
	\end{align}
	Let $q$ be the probability density function over the parameters $w'$. We construct a new distribution $\tilde{Q}$ over predictors $f_{\tilde{w}}$, where $\tilde{w}$ is restricted to $\sw$ with the probability density function
	\begin{align}
		\tilde{q}(\tilde{w}) = \frac{1}{Z}\begin{cases}
			q(\tilde{w}), & \tilde{w}\in\sw, \\ 0, & \tilde{w}\in\sw^c.
		\end{cases} \nonumber
	\end{align}
	Here $Z=\Prob\left\{w'\in\sw\right\}\ge\frac{1}{2}$ and $\sw^c$ is the complement set of $\sw$. By the definition of $\tilde{Q}$, for $\tilde{w}\sim\tilde{Q}$, we have
	\begin{align}
		\max_{i,j\in[k],\atop G\in\X\times\G} \left|\mar[RM]{\fw{\tilde{w}}}{i,j} - \mar[RM]{\fw{w}}{i,j}\right|\le \frac{\gamma}{2}. \label{eq:ppf-bound-lm-1}
	\end{align}
	Recall that
	\begin{align}
		\ml{R}{0} &= \Prob_{(G,y)\sim\D} \left\{\mar[RM]{\fw{w}}{y} \le 0\right\}, \nonumber
	\end{align}
	and 
	\begin{align}
		\ml[f_{\tilde{w}}]{R}{\frac{\gamma}{2}} &= \Prob_{(G,y)\sim\D} \left\{\mar[RM]{\fw{\tilde{w}}}{y} \le \frac{\gamma}{2}\right\}. \nonumber
	\end{align}
	From \cref{eq:ppf-bound-lm-1}, we have
	\begin{align}
		\mar[RM]{\fw{\tilde{w}}}{y,j} \le \mar[RM]{\fw{w}}{y,j} + \frac{\gamma}{2}. \nonumber
	\end{align}
	Then, by the definition of robust margin operator, we have
	\begin{align}
		\mar[RM]{\fw{\tilde{w}}}{y} &= \inf_{G'\in\delta_{\tilde{w}}(G)}\left(\fw[G']{\tilde{w}}_y - \max_{j\ne y}\fw[G']{\tilde{w}}_j\right) \nonumber\\
		&= \inf_{G'\in\delta_{\tilde{w}}(G)}\inf_{j\ne y}\left(\fw[G']{\tilde{w}}_y - \fw[G']{\tilde{w}}_j\right) \nonumber\\
		&= \inf_{j\ne y}\inf_{G'\in\delta_{\tilde{w}}(G)}\left(\fw[G']{\tilde{w}}_y - \fw[G']{\tilde{w}}_j\right) \nonumber\\
		&= \inf_{j\ne y}\mar[RM]{\fw{\tilde{w}}}{y,j} \nonumber\\
		&\le \inf_{j\ne y}\mar[RM]{\fw{w}}{y,j} + \frac{\gamma}{2} \nonumber\\
		&= \mar[RM]{\fw{w}}{y} + \frac{\gamma}{2}. \nonumber
	\end{align}
	Therefore, we have 
	\begin{align}
		\mar[RM]{\fw{w}}{y} \le 0\ \Rightarrow\ \mar[RM]{\fw{\tilde{w}}}{y} \le \frac{\gamma}{2} . \nonumber
	\end{align}
	Since it holds for any sample $(G,y)\in\X\times\G\times\Y$, we can get
	\begin{align}\label{eq:ppf-bound-lm-2}
		\ml{R}{0} \le \ml[f_{\tilde{w}}]{R}{\frac{\gamma}{2}}.
	\end{align}
	
	Similarly, recall that 
	\begin{align}
		\eml{R}{\frac{\gamma}{2}} = \frac{1}{m}\sum_{(G,y)\in S}\mathbbm{1}\left\{\mar[RM]{\fw{\tilde{w}}}{y}\le \frac{\gamma}{2}\right\}, \nonumber
	\end{align}
	and 
	\begin{align}
		\eml{R}{\gamma} = \frac{1}{m}\sum_{(G,y)\in S}\mathbbm{1}\left\{\mar[RM]{\fw{w}}{y}\le \gamma\right\}. \nonumber
	\end{align}
	From \cref{eq:ppf-bound-lm-1}, we have
	\begin{align}
		\mar[RM]{\fw{w}}{y,j} \le \mar[RM]{\fw{\tilde{w}}}{y,j} + \frac{\gamma}{2}. \nonumber
	\end{align}
	Then, by the definition of robust margin operator, we have
	\begin{align}
		\mar[RM]{\fw{w}}{y} &= \inf_{G'\in\delta_{w}(G)}\left(\fw[G']{w}_y - \max_{j\ne y}\fw[G']{w}_j\right) \nonumber\\
		&= \inf_{G'\in\delta_{w}(G)}\inf_{j\ne y}\left(\fw[G']{w}_y - \fw[G']{w}_j\right) \nonumber\\
		&= \inf_{j\ne y}\inf_{G'\in\delta_{w}(G)}\left(\fw[G']{w}_y - \fw[G']{w}_j\right) \nonumber\\
		&= \inf_{j\ne y}\mar[RM]{\fw{w}}{y,j} \nonumber\\
		&\le \inf_{j\ne y}\mar[RM]{\fw{\tilde{w}}}{y,j} + \frac{\gamma}{2} \nonumber\\
		&= \mar[RM]{\fw{\tilde{w}}}{y} + \frac{\gamma}{2}. \nonumber
	\end{align}
	Therefore, we have 
	\begin{align}
		\mar[RM]{\fw{\tilde{w}}}{y} \le \frac{\gamma}{2}\ \Rightarrow\ \mar[RM]{\fw{w}}{y} \le \gamma, \nonumber
	\end{align}
	which indicates that 
	\begin{align}
		\mathbbm{1}\left\{\mar[RM]{\fw{\tilde{w}}}{y}\le \frac{\gamma}{2}\right\} \le \mathbbm{1}\left\{\mar[RM]{\fw{w}}{y}\le\gamma\right\}. \nonumber
	\end{align}
	Since it holds for any sample $(G,y)\in\X\times\G\times\Y$, we can get
	\begin{align}
		\eml[f_{\tilde{w}}]{R}{\frac{\gamma}{2}} \le \eml{R}{\gamma}. \label{eq:ppf-bound-lm-3}
	\end{align}
	Now, with probability at least $1-\delta$ over the choice of a training set $S=\{z_1,\dots,z_m\}$ independently sampled from $\D$, we have,
	\begin{align}
		\ml{R}{0} &\le  \E_{\tilde{w}}\left[\ml[f_{\tilde{w}}]{R}{\frac{\gamma}{2}}\right] &(\mbox{\cref{eq:ppf-bound-lm-2}}) \nonumber\\
		&\le \E_{\tilde{w}}\left[\eml[f_{\tilde{w}}]{R}{\frac{\gamma}{2}}\right] + \sqrt{\frac{KL(\tilde{Q}\Vert P)+\log(2m/\delta)}{2(m-1)}} &(\mbox{\cref{thm:mcallester}})\nonumber\\
		&\le \eml[f_{w}]{R}{\gamma} + \sqrt{\frac{KL(\tilde{Q}\Vert P)+\log(2m/\delta)}{2(m-1)}}. &(\mbox{\cref{eq:ppf-bound-lm-3}}) \nonumber
	\end{align}
	Note that
	\begin{align}
		\kl{Q} &= \int_{\tilde{w}\in\sw}Q\log\frac{Q}{P}d\tilde{w} + \int_{\tilde{w}\in\sw^c}Q\log\frac{Q}{P}d\tilde{w} \nonumber\\
		&= \int_{\tilde{w}\in\sw}\frac{QZ}{Z}\log\frac{QZ}{PZ}d\tilde{w} + \int_{\tilde{w}\in\sw^c}\frac{Q(1-Z)}{1-Z}\log\frac{Q(1-Z)}{P(1-Z)}d\tilde{w} \nonumber\\
		&= Z\int_{\tilde{w}\in\sw}\frac{Q}{Z}\log\frac{Q}{ZP}d\tilde{w} + \int_{\tilde{w}\in\sw}Q\log Zd\tilde{w} \nonumber\\
		&\ \ +  (1-Z)\int_{\tilde{w}\in\sw^c}\frac{Q}{1-Z}\log\frac{Q}{(1-Z)P}d\tilde{w} + \int_{\tilde{w}\in\sw^c}Q\log(1-Z)d\tilde{w} \nonumber\\
		&= Z\kl{\tilde{Q}} + (1-Z)\kl{\bar{Q}} - H(Z), \nonumber
	\end{align}
	where $\bar{Q}=\frac{Q}{1-Z}$ and $H(Z)=-Z\log Z-(1-Z)\log(1-Z)$. Since $H(z)$ is non-increasing on $[\frac{1}{2},1)$, we have $0\le H(Z)\le\log(2)$. Since \textit{KL} is always positive, we have
	\begin{align}
		\kl{\tilde{Q}} &= \frac{1}{Z}\left[\kl{Q}-(1-Z)\kl{\bar{Q}} + H(Z)\right]\nonumber\\
		&\le 2\kl{Q} + 2\log(2). \nonumber
	\end{align}
	Hence, with probability at least $1-\delta$ over the choice of a training set $S=\{z_1,\dots,z_m\}$ independently sampled from $\D$,
	\begin{align}
		\ml{R}{0} &\le \eml[f_{{w}}]{R}{\gamma} + \sqrt{\frac{KL(\tilde{Q}\Vert P)+\log(2m/\delta)}{2(m-1)}} \nonumber\\
		&\le \eml[f_{{w}}]{R}{\gamma} + \sqrt{\frac{2\kl{Q}+2\log 2+\log(2m/\delta)}{2(m-1)}} \nonumber\\
		&= \eml[f_{{w}}]{R}{\gamma} + \sqrt{\frac{2\kl{Q}+\log(8m/\delta)}{2(m-1)}}. \qedhere\nonumber
	\end{align}
\end{proof}

\subsection{Results of MPGNN}\label{pf:mpgnn}
\subsubsection{Proof of Perturbation Bound for MPGNN}
\label{pf:lem-pert-operator-mpgnn}

\begin{lemma}
	\label{lem:pert-f-mpgnn}
	Assume that for any MPGNN model $f_w\in\cH$ with $l$ layers. 
	For any $w$ such that $\norm[2]{U_i}\le M_1, \forall i\in[l-1], \norm[2]{W_i}\le M_2, \forall i\in[l]$ and any perturbation $\dw=vec\left(\left\{\dww{i}\right\}_{i=1}^{l}\right)$ such that $\eta=\max\left(\left\{\frac{\norm[2]{\duu{i}}}{\norm[2]{U_i}}\right\}_{i=1}^{l-1},\left\{\frac{\norm[2]{\dww{i}}}{\norm[2]{W_i}}\right\}_{i=1}^{l}\right)\le {1\over l}, \forall i\in[l]$, for any $G=(X,A)\in\X\times\G$, the change in the output of MPGNN is bounded as 
	\begin{align}
		\norm[2]{\fw{w+\dw} - \fw{w}} &\le \begin{cases}
			e\eta LM_1M_2\norm[2]{X}(l+1)^2, & \tau=1,\\
			e\eta lLM_1M_2\norm[2]{X}\dfrac{\tau^{l-1}-1}{\tau-1}, & \tau\ne1,
		\end{cases} \nonumber
	\end{align}
	where $\tau=L^3M_2\norm[2]{\pg}$.
\end{lemma}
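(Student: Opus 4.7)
The plan is to extend the GCN argument in \cref{lem:pert-f-gcn} to MPGNN. There are two essential adjustments: each layer now involves two perturbable weight matrices $W_j,U_j$ wrapped in three Lipschitz nonlinearities $\phi_j,\rho_j,\psi_j$, and the initial condition $H_0=0$ forces the input $X$ to be re-injected at every layer via $XU_j$. I would work throughout in Frobenius norm, using \cref{prop:f-norm}, the Lipschitzness of $\phi_j,\rho_j,\psi_j$ with $\phi_j(0)=\rho_j(0)=\psi_j(0)=0$, and the perturbation-compatibility bound $\norm[2]{W_j+\dww{j}}\le(1+1/l)\norm[2]{W_j}$ that follows from $\eta\le 1/l$.

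First I would bound the forward propagation. Peeling off $\phi_j$ and $\rho_j$ via their Lipschitz constant, applying \cref{prop:f-norm} to $\pg\psi_j(H_{j-1})W_j$, and inserting the spectral-norm bounds on $U_j,W_j$, one gets $\norm[F]{H_j}\le LM_1\norm[F]{X}+\tau\norm[F]{H_{j-1}}$. Iterating from $H_0=0$ yields $\norm[F]{H_j}\le LM_1\norm[F]{X}\cdot\frac{\tau^j-1}{\tau-1}$ for $\tau\ne 1$ and $\norm[F]{H_j}\le jLM_1\norm[F]{X}$ for $\tau=1$. Repeating the peeling for the perturbed-minus-unperturbed difference, the key step is the decomposition
\begin{align*}
\psi_j(H_{j-1}')(W_j+\dww{j})-\psi_j(H_{j-1})W_j=\bigl(\psi_j(H_{j-1}')-\psi_j(H_{j-1})\bigr)(W_j+\dww{j})+\psi_j(H_{j-1})\dww{j},
\end{align*}
which, combined with $\norm[2]{\duu{j}}\le\eta M_1$, $\norm[2]{\dww{j}}\le\eta M_2$ and the forward bound above, produces the one-step recursion $D_j\le(1+1/l)\tau\,D_{j-1}+b_{j-1}$ for $D_j:=\norm[F]{H_j'-H_j}$, with $b_{j-1}\le\eta LM_1\norm[F]{X}\cdot\frac{\tau^j-1}{\tau-1}$ (resp.\ $\eta j LM_1\norm[F]{X}$ if $\tau=1$).

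Unrolling with $D_0=0$ gives $D_{l-1}\le\sum_{k=0}^{l-2}b_k\prod_{i=k+1}^{l-2}a_i$. Pulling out $(1+1/l)^{l-2}\le e$, one is left with an arithmetic sum for $\tau=1$, and with $\sum_{k=0}^{l-2}\frac{\tau^{k+1}-1}{\tau-1}\tau^{l-2-k}$ for $\tau\ne 1$, which I would evaluate using the identity $(\tau^{k+1}-1)\tau^{l-2-k}=\tau^{l-1}-\tau^{l-2-k}$ and telescope. The readout step is then identical in spirit to the GCN proof:
\begin{align*}
\norm[2]{H_l'-H_l}\le\tfrac{1}{\sqrt{n}}\bigl[\norm[2]{W_l+\dww{l}}\,D_{l-1}+\norm[2]{\dww{l}}\norm[F]{H_{l-1}}\bigr],
\end{align*}
after which one converts $\tfrac{1}{\sqrt{n}}\norm[F]{X}\le\norm[2]{X}$ to obtain the claimed bound.

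The main obstacle is the sum in the $\tau\ne 1$ case: the natural closed form $(l-1)\tau^{l-1}-\frac{\tau^{l-1}-1}{\tau-1}$ has to be reshaped into the claimed $l\cdot\frac{\tau^{l-1}-1}{\tau-1}$. For $\tau>1$ this appears to require the Bernoulli-type inequality $\tau^l\ge 1+l(\tau-1)$, while for $0<\tau<1$ one instead uses the crude but positive-term estimate $\sum_{j=0}^k\tau^j\le\sum_{j=0}^{l-2}\tau^j=\frac{\tau^{l-1}-1}{\tau-1}$. The $\tau=1$ case is comparatively easy since the arithmetic sum $\sum_{k=0}^{l-2}(k+1)=l(l-1)/2$, multiplied by $(1+1/l)^{l-2}\le e$, immediately yields the $(l+1)^2$ scaling up to constants.
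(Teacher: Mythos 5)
Your proposal is correct and follows essentially the same route as the paper's proof: the same forward recursion $\norm[F]{H_j}\le \kappa+\tau\norm[F]{H_{j-1}}$ with $\kappa=LM_1\norm[F]{X}$, the same product-rule decomposition of the perturbed layer giving $D_j\le(1+\eta)\tau D_{j-1}+\kappa\eta\frac{\tau^j-1}{\tau-1}$, the same unrolling and readout step, and the same conversion $\frac{1}{\sqrt n}\norm[F]{X}\le\norm[2]{X}$. The only divergence is cosmetic — you close the $\tau\ne1$ sum by telescoping plus Bernoulli (or the observation that each term $\frac{\tau^{k+1}-1}{\tau-1}\tau^{l-2-k}=\sum_{j=l-2-k}^{l-2}\tau^j\le\frac{\tau^{l-1}-1}{\tau-1}$, which in fact works for all $\tau>0$), whereas the paper bounds each summand $1-\tau^{-i}$ by the largest one; both yield the stated constants, and your cruder arithmetic sum in the $\tau=1$ case also lands within $e(l+1)^2$.
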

\begin{proof}
	We denote the node representation in $j$-th layer as
	\begin{align}
		& f_{w}^{j}(G) = H_{j} = \phi_j\left(E_{j}\right), \ E_{j} = XU_{j} + \rho_j\left(F_{j}\right), \ F_{j} = \pg\psi_j\left(H_{j-1}\right)W_{j}, \ j\in[l-1], \nonumber\\
		& f_{w}^{l}(G) = H_l = \frac{1}{n}\bone H_{l-1}W_l. \nonumber \nonumber
	\end{align}
	Adding perturbation $\dw$ to the weights $w$, i.e., for the $j$-th layer, the perturbed weights are $W_j+\dww{j}$ or $U_j+\duu{j}$ and denote $H_{j}',\ E_j',\ F_j'$ as the perturbed term respectively.
	
	\textbf{Upper Bound on the Node Representation. }For any $j<l$,
	\begin{align}
		\norm[F]{F_j} &= \norm[F]{\pg\psi_j\left(H_{j-1}\right)W_{j}} \nonumber\\
		&\le \norm[2]{\pg}\norm[F]{\psi_j\left(H_{j-1}\right)}\norm[2]{W_{j}} \nonumber\\
		&\le L M_2\norm[2]{\pg}\norm[F]{H_{j-1}}, \nonumber
	\end{align}
	where the first inequality holds by \cref{prop:f-norm}, and the last one holds by the assumption, the Lipschitzness of $\psi_j$ with $\psi_j(0)=0$. Then we have, for any $j<l$,
	\begin{align}
		\norm[F]{H_{j}} &= \norm[F]{\phi_j\left(E_j\right)} \le L\norm[F]{XU_{j} + \rho_j\left(F_j\right)} \nonumber\\
		&\le L\norm[F]{XU_{j}} + L\norm[F]{\rho_j\left(F_j\right)} \nonumber\\
		&\le L\norm[F]{X}\norm[2]{U_{j}} + L^{2}\norm[F]{F_j} \nonumber\\
		&\le L M_1 \norm[F]{X} + L^{3}M_2\norm[2]{\pg}\norm[F]{H_{j-1}}. \nonumber
	\end{align}
	
	Denoting $\kappa=LM_1\norm[F]{X}, \tau=L^{3}M_2\norm[2]{\pg}$, by applying the relationship iteratively and $H_0=0$, we have, for $j<l$,
	\begin{align}
		\norm[F]{H_{j}} &\le \kappa + \tau\norm[F]{H_{j-1}} \nonumber\\
		&\le \tau^{j}\norm[F]{H_{0}} + \sum_{i=0}^{j-1}\kappa \tau^{j-1-i} = \sum_{i=0}^{j-1}\kappa \tau^{j-1-i} \nonumber\\
		&= \begin{cases}
			\kappa j, & \tau = 1, \\
			\kappa\dfrac{\tau^{j}-1}{\tau-1}, & \tau\ne 1. \\
		\end{cases} \label{eq:pf-lem-pert-f-mpgnn-1}
	\end{align}
	
	
	\textbf{Upper Bound on the Change of Node Representation. }For any $j<l$,
	\begin{align}
		\norm[F]{F_{j}'-F_{j}} &= \norm[F]{\pg\psi_j\left(H_{j-1}'\right)\left(W_{j}+\dww{j}\right) - \pg\psi_j\left(H_{j-1}\right)W_{j}} \nonumber\\
		&= \norm[F]{\pg\left[\psi_j\left(H_{j-1}'\right)-\psi_j\left(H_{j-1}\right)\right]\left(W_{j}+\dww{j}\right) + \pg\psi_j\left(H_{j-1}\right)\dww{j}} \nonumber\\
		&\le \norm[F]{\pg\left[\psi_j\left(H_{j-1}'\right)-\psi_j\left(H_{j-1}\right)\right]\left(W_{j}+\dww{j}\right)} + \norm[F]{\pg\psi_j\left(H_{j-1}\right)\dww{j}} \nonumber\\
		&\le \norm[2]{\pg}\norm[F]{\psi_j\left(H_{j-1}'\right)-\psi_j\left(H_{j-1}\right)}\norm[2]{W_{j}+\dww{j}} + \norm[2]{\pg}\norm[F]{\psi_j\left(H_{j-1}\right)}\norm[2]{\dww{j}} \nonumber\\
		&\le L \left(1+\eta\right)M_2 \norm[2]{\pg}\norm[F]{H_{j-1}'-H_{j-1}} + L\eta M_2 \norm[2]{\pg}\norm[F]{H_{j-1}}, \nonumber
	\end{align}
	where the second inequality holds from \cref{prop:f-norm} and the last one holds by the assumption and the Lipschitzness of $\psi_j$ with $\psi_j(0)=0$. Then we have, for any $j<l$,
	\begin{align}
		\norm[F]{H_{j}'-H_{j}} &= \norm[F]{\phi_j(E_j')-\phi_j(E_j)} \nonumber\\
		&\le L \norm[F]{\left[X\left(U_{j}+\duu{j}\right) + \rho_j\left(F_j'\right)\right] - \left[XU_{j} + \rho_j\left(F_j\right)\right]} \nonumber\\
		&= L \norm[F]{X\duu{j} + \rho_j\left(F_j'\right) - \rho_j\left(F_j\right)} \nonumber\\
		&\le L \norm[F]{X\duu{j}} + L\norm[F]{\rho_j\left(F_j'\right) - \rho_j\left(F_j\right)} \nonumber\\
		&\le L \norm[F]{X}\norm[2]{\duu{j}} + L^{2}\norm[F]{F_j' - F_j} \nonumber\\
		&\le L\eta M_1 \norm[F]{X} + L^{3}\left(1+\eta\right)M_2 \norm[2]{\pg}\norm[F]{H_{j-1}'-H_{j-1}} + L^{3}\eta M_2 \norm[2]{\pg}\norm[F]{H_{j-1}} \nonumber\\
		&= \tau(1+\eta)\norm[F]{H_{j-1}'-H_{j-1}} + \kappa\eta + \tau\eta\norm[F]{H_{j-1}}. \nonumber
	\end{align}
	We consider different values of $\tau$. If $\tau=1$, we have
	\begin{align}
		\norm[F]{H_{j}'-H_{j}} &\le (1+\eta)\norm[F]{H_{j-1}'-H_{j-1}} + \kappa\eta + \kappa\eta(j-1) \qquad(\mbox{By \cref{eq:pf-lem-pert-f-mpgnn-1}}) \nonumber\\
		&= (1+\eta)\norm[F]{H_{j-1}'-H_{j-1}} + \kappa\eta j \nonumber\\
		&\le \sum_{i=0}^{j-1}\kappa\eta(i+1)\left(1+\eta\right)^{j-i-1} \nonumber\\
		&\le \kappa\eta\left(1+\frac{1}{l}\right)^{j}\sum_{i=0}^{j-1}(i+1)\left(1+\frac{1}{l}\right)^{-i-1} = \kappa\eta\left(1+\frac{1}{l}\right)^{j}\sum_{i=1}^{j}i\left(1+\frac{1}{l}\right)^{-i} \nonumber\\
		&= \kappa\eta\left(1+\frac{1}{l}\right)^{j} \left(1+\frac{1}{l}\right)\inv \frac{1-(j+1)\left(1+\frac{1}{l}\right)^{-j}+j\left(1+\frac{1}{l}\right)^{-j-1}}{\left(1-\left(1+\frac{1}{l}\right)\inv\right)^{2}} \nonumber\\
		&= \kappa\eta \frac{\left(1+\frac{1}{l}\right)^{j+1}-(j+1)\left(1+\frac{1}{l}\right)+j}{\left(\left(1+\frac{1}{l}\right)-1\right)^{2}} \nonumber\\
		&\le \kappa\eta l^2 \left(1+\frac{1}{l}\right)^{j+1} = \kappa\eta l(1+l)\left(1+\frac{1}{l}\right)^{j}. \label{eq:pf-lem-pert-f-mpgnn-2}
	\end{align} 
	If $\tau\ne1$, we have
	\begin{align}
		\norm[F]{H_{j}'-H_{j}} &\le \tau(1+\eta)\norm[F]{H_{j-1}'-H_{j-1}} + \kappa\eta + \tau\eta\norm[F]{H_{j-1}} \nonumber\\
		&\le \tau(1+\eta)\norm[F]{H_{j-1}'-H_{j-1}} + \kappa\eta + \kappa\eta\tau\frac{\tau^{j-1}-1}{\tau-1} \qquad(\mbox{By \cref{eq:pf-lem-pert-f-mpgnn-1}}) \nonumber\\
		&= \tau(1+\eta)\norm[F]{H_{j-1}'-H_{j-1}} + \kappa\eta\frac{\tau^{j}-1}{\tau-1} \nonumber\\
		&\le \sum_{i=0}^{j-1}\kappa\eta\left(\frac{\tau^{i+1}-1}{\tau-1}\right)\tau^{j-i-1}(1+\eta)^{j-i-1} \nonumber\\
		&\le \frac{\kappa\eta\tau^{j}}{\tau-1}\left(1+\frac{1}{l}\right)^{j}\sum_{i=0}^{j-1}\left(1-\tau^{-i-1}\right)\left(1+\frac{1}{l}\right)^{-i-1} \nonumber\\
		&\le \frac{\kappa\eta\tau^{j}}{\tau-1}\left(1+\frac{1}{l}\right)^{j}\sum_{i=1}^{j}\left(1-\tau^{-i}\right). \label{eq:pf-lem-pert-f-mpgnn-3}
	\end{align}
	
	\textbf{Final Bound on the Readout Layer. }
	\begin{align}
		\norm[2]{H_{l}'-H_{l}} &= \norm[2]{\frac{1}{n}\bm{1}_nH_{l-1}'(W_{l}+\dww{l}) - \frac{1}{n}\bm{1}_nH_{l-1}W_{l}} \nonumber\\
		&= \norm[2]{\frac{1}{n}\bm{1}_n \left(H_{l-1}'-H_{l-1}\right)(W_{l}+\dww{l}) + \frac{1}{n}\bm{1}_n H_{l-1}\dww{l}} \nonumber\\
		&\le \norm[2]{\frac{1}{n}\bm{1}_n \left(H_{l-1}'-H_{l-1}\right)(W_{l}+\dww{l})} + \norm[2]{\frac{1}{n}\bm{1}_n H_{l-1}\dww{l}} \nonumber\\
		&\le \norm[2]{\frac{1}{n}\bm{1}_{n}}\norm[2]{\left(H_{l-1}'-H_{l-1}\right)\left(W_{l}+\dww{l}\right)} + \norm[2]{\frac{1}{n}\bm{1}_{n}}\norm[2]{H_{l-1}\dww{l}} \nonumber\\
		&\le \frac{1}{\sqrt{n}}\norm[F]{H_{l-1}'-H_{l-1}}\norm[2]{W_{l}+\dww{l}} + \frac{1}{\sqrt{n}}\norm[F]{H_{l-1}}\norm[2]{\dww{l}} \nonumber\\
		&\le \frac{1}{\sqrt{n}}(1+\eta)M_2 \norm[F]{H_{l-1}'-H_{l-1}} + \frac{1}{\sqrt{n}}\eta M_2 \norm[F]{H_{l-1}}. \nonumber
	\end{align}
	
	If $\tau=1$, combining \cref{eq:pf-lem-pert-f-mpgnn-1,eq:pf-lem-pert-f-mpgnn-2}, we have
	\begin{align}
		\norm[2]{H_{l}'-H_{l}} &\le \frac{1}{\sqrt{n}}(1+\eta)M_2 \norm[F]{H_{l-1}'-H_{l-1}} + \frac{1}{\sqrt{n}}\eta M_2 \norm[F]{H_{l-1}} \nonumber\\
		&\le \frac{1}{\sqrt{n}}(1+\eta)\eta l M_2 \kappa(1+l)\left(1+\frac{1}{l}\right)^{l-1} + \frac{1}{\sqrt{n}}\eta (l-1) \kappa M_2 \nonumber\\
		&\le \frac{1}{\sqrt{n}} \eta l (1+l) \kappa M_2 \left(1+\frac{1}{l}\right)^{l} + \frac{1}{\sqrt{n}}\eta (l-1) \kappa M_2  \nonumber\\
		&= \frac{1}{\sqrt{n}} \kappa \eta M_2 \left(1+\frac{1}{l}\right)^{l} \left(l(1+l)+(l-1)\left(1+\frac{1}{l}\right)^{-l}\right) \nonumber\\
		&\le \frac{1}{\sqrt{n}}e\kappa \eta M_2 \left(l+l^2+l-1\right) \nonumber\\
		&\le \frac{1}{\sqrt{n}}e\kappa \eta M_2(l+1)^2. \nonumber
	\end{align}
	If $\tau\ne1$, combining \cref{eq:pf-lem-pert-f-mpgnn-1,eq:pf-lem-pert-f-mpgnn-3}, we have
	\begin{align}
		\norm[2]{H_{l}'-H_{l}} &\le \frac{1}{\sqrt{n}}(1+\eta)M_2 \norm[F]{H_{l-1}'-H_{l-1}} + \frac{1}{\sqrt{n}}\eta M_2 \norm[F]{H_{l-1}} \nonumber\\
		&\le \frac{1}{\sqrt{n}}(1+\eta)M_2 \frac{\kappa\eta\tau^{l-1}}{\tau-1}\left(1+\frac{1}{l}\right)^{l-1}\sum_{i=1}^{l-1}\left(1-\tau^{-i}\right) + \frac{1}{\sqrt{n}}\eta M_2 \kappa\frac{\tau^{l-1}-1}{\tau-1} \nonumber\\
		&\le \frac{1}{\sqrt{n}}\kappa\eta\tau^{l-1} M_2\left(1+\frac{1}{l}\right)^{l}\left(\sum_{i=1}^{l-1}\frac{1-\tau^{-i}}{\tau-1} + \frac{1-\tau^{1-l}}{\tau-1}\left(1+\frac{1}{l}\right)^{-l}\right) \nonumber\\
		&\le \frac{1}{\sqrt{n}}\kappa\eta\tau^{l-1} M_2\left(1+\frac{1}{l}\right)^{l}\left(\sum_{i=1}^{l-1}\frac{1-\tau^{-i}}{\tau-1} + \frac{1-\tau^{-(l-1)}}{\tau-1}\right) \nonumber\\
		&\le \frac{1}{\sqrt{n}}\kappa\eta\tau^{l-1} M_2\left(1+\frac{1}{l}\right)^{l} l \frac{1-\tau^{-(l-1)}}{\tau-1} \nonumber\\
		&\le \frac{1}{\sqrt{n}}e\kappa \eta l M_2 \frac{\tau^{l-1}-1}{\tau-1}. \nonumber
	\end{align}
	Therefore, 
	\begin{align}
		\norm[2]{\fw{w+u}-\fw{w}} = \norm[2]{H_{l}'-H_{l}} &\le 
		\begin{cases}
			e \eta L M_1  M_2 \norm[2]{X} (l+1)^2, & \tau=1, \\
			e \eta l L M_1 M_2 \norm[2]{X} \dfrac{\tau^{l-1}-1}{\tau-1}, & \tau\ne1. \\
		\end{cases}  \nonumber
	\end{align}
\end{proof}

\begin{lemma}
	\label{lem:pert-mpgnn}
	Under Assumptions~\ref{assumption:graph} to~\ref{assumption:mpgnn}, for any $w$ such that $\norm[2]{U_i}\le M_1, \forall  i\in[l-1], \norm[2]{W_i}\le M_2, \forall i\in[l]$ and any perturbation $\dw=vec\left(\left\{\dww{i}\right\}_{i=1}^{l}\right)$ such that $\eta=\max\left(\left\{\frac{\norm[2]{\duu{i}}}{\norm[2]{U_i}}\right\}_{i=1}^{l-1},\left\{\frac{\norm[2]{\dww{i}}}{\norm[2]{W_i}}\right\}_{i=1}^{l}\right)\le {1\over l}, \forall  i\in[l]$, the change in the margin operator is bounded as 
	\begin{align}
		\left|\mar{\fw{w+\dw}}{i,j}-\mar{\fw{w}}{i,j}\right| \le \begin{cases}
			2e\eta LBM_1M_2 (l+1)^2, & \tau=1,\\
			2e\eta lLBM_1M_2 \dfrac{\tau^{l-1}-1}{\tau-1}, & \tau\ne1,
		\end{cases} \nonumber
	\end{align}
	where $\tau=L^3M_2\norm[2]{\pg}$. Further, consider the $\epsilon$-attack ($\epsilon>0$), the change in the robust margin operator is bounded as 
	\begin{align}
		\left|\mar[RM]{\fw{w+\dw}}{i,j}-\mar[RM]{\fw{w}}{i,j}\right| \le \begin{cases}
			2e\eta L(B+\epsilon)M_1M_2 (l+1)^2, & \tau=1,\\
			2e\eta lL(B+\epsilon)M_1M_2 \dfrac{\tau^{l-1}-1}{\tau-1}, & \tau\ne1.
		\end{cases} \nonumber
	\end{align}
\end{lemma}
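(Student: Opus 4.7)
The plan is to mirror the two-step structure used for GCN in \cref{lem:pert-gcn}, leveraging the already-established output perturbation bound \cref{lem:pert-f-mpgnn} in place of \cref{lem:pert-f-gcn}. For the first (non-robust) inequality, I would start from the definition of the margin operator and use the triangle inequality to obtain
\begin{align}
\left|\mar{\fw{w+\dw}}{i,j}-\mar{\fw{w}}{i,j}\right| \le \left|\fw{w+\dw}_i-\fw{w}_i\right|+\left|\fw{w+\dw}_j-\fw{w}_j\right| \le 2\norm[2]{\fw{w+\dw}-\fw{w}}. \nonumber
\end{align}
Then I would invoke \cref{lem:pert-f-mpgnn}, whose hypotheses on $\eta$ match exactly those assumed here, and substitute $\norm[2]{X}\le B$ via \cref{assumption:graph}. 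Splitting into the two cases $\tau=1$ and $\tau\ne 1$ reproduces the two branches of the claimed bound, with the factor $2$ absorbed in front.

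For the robust inequality, I would follow the same device used in the proof of \cref{lem:pert-gcn}. Define
\begin{align}
G(w)=\mathop{\arg\inf}_{G'\in\delta_w(G)}\mar{\fw[G']{w}}{i,j}, \nonumber
\end{align}
and let $X(w)$ denote its feature matrix. Setting $\tilde w = w+\dw$, the key observation is that by the definition of infimum we have the two-sided sandwich
\begin{align}
\mar{\fw[G(\tilde w)]{\tilde w}}{i,j}-\mar{\fw[G(w)]{w}}{i,j} &\le \mar{\fw[G(w)]{\tilde w}}{i,j}-\mar{\fw[G(w)]{w}}{i,j}, \nonumber\\
\mar{\fw[G(w)]{w}}{i,j}-\mar{\fw[G(\tilde w)]{\tilde w}}{i,j} &\le \mar{\fw[G(\tilde w)]{w}}{i,j}-\mar{\fw[G(\tilde w)]{\tilde w}}{i,j}. \nonumber
\end{align}
Hence the robust difference is upper bounded by the maximum of two ordinary (non-robust) differences, each evaluated at a fixed graph. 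Applying the first part of the lemma to each of those graphs and replacing $B$ by $\max\{\norm[2]{X(\tilde w)},\norm[2]{X(w)}\}\le B+\epsilon$ (which follows from $\norm[2]{X'-X}\le\epsilon$ in the definition of the $\epsilon$-attack together with \cref{assumption:graph}) yields the claimed bound in both the $\tau=1$ and $\tau\ne 1$ cases.

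The routine calculations are essentially done inside \cref{lem:pert-f-mpgnn}; I do not expect any further complications once that lemma is in hand. The only subtle point is the robust step: one must be careful that $G(w)$ and $G(\tilde w)$ can differ, so the infimum cannot simply be exchanged with the perturbation. The sandwich argument above circumvents this by comparing each robust margin to an ordinary margin at the \emph{other} model's worst-case graph, which is the conceptual heart of the extension from the standard to the adversarial setting and is the step most prone to error.
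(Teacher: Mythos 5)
Your proposal is correct and follows essentially the same route as the paper's proof: the triangle inequality reduces the margin difference to $2\norm[2]{\fw{w+\dw}-\fw{w}}$, which is controlled by \cref{lem:pert-f-mpgnn}, and the robust case is handled by exactly the same two-sided sandwich comparing each robust margin to the ordinary margin at the other model's minimizing graph, with $\max\{\norm[2]{X(\tilde w)},\norm[2]{X(w)}\}\le B+\epsilon$. No gaps.
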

\begin{proof}
	Let 
	\begin{align}
		\C = \begin{cases}
			e\eta LM_1M_2 (l+1)^2, & \tau=1,\\
			e\eta lLM_1M_2 \dfrac{\tau^{l-1}-1}{\tau-1}, & \tau\ne1.
		\end{cases} \nonumber
	\end{align}
	For any $i,j\in[K]$, by the definition of margin operator, we have
	\begin{align}
		\left|\mar{\fw{w+\dw}}{i,j}-\mar{\fw{w}}{i,j}\right| &\le \left|\fw{w+\dw}_i-\fw{w}_i\right|  + \left|\fw{w+\dw}_j-\fw{w}_j\right|\nonumber\\
		&\le 2\norm[2]{\fw{w+\dw}-\fw{w}}. \nonumber
	\end{align}
	By \cref{prop:graph-norm,lem:pert-f-mpgnn}, we have
	\begin{align}
		\left|\mar{\fw{w+\dw}}{i,j}-\mar{\fw{w}}{i,j}\right| &\le 2\norm[2]{X}\C \le 2B\C. \nonumber
	\end{align}
	Further, let
	\begin{align}
		G(w) &= \mathop{\arg\inf}_{G'\in\delta_w(G)}\left(\fw[G']{w}_i - \fw[G']{w}_j\right) = \mathop{\arg\inf}_{G'\in\delta_w(G)} \mar{\fw{w}}{i,j}, \nonumber
	\end{align}
	and $X(w)$ is the corresponding node features matrix. Denoting $\tilde{w}=w+\dw$, we have
	\begin{align}
		&\quad \, \left|\mar[RM]{\fw{\tilde{w}}}{i,j}-\mar[RM]{\fw{w}}{i,j}\right| \nonumber\\
		&= \left|\inf_{G'\in\delta_{\tilde{w}}(G)}\mar{\fw{\tilde{w}}}{i,j} - \inf_{G'\in\delta_{w}(G)}\mar{\fw{w}}{i,j}\right| \nonumber\\
		&= \left|\mar{\fw[G(\tilde{w})]{\tilde{w}}}{i,j} - \mar{\fw[G(w)]{w}}{i,j}\right| \nonumber\\
		&\le \max\left\{\left|\mar{\fw[G(\tilde{w})]{\tilde{w}}}{i,j} - \mar{\fw[G(\tilde{w})]{w}}{i,j}\right|,\right. \nonumber\\
		&\qquad\qquad \left.\left|\mar{\fw[G(w)]{\tilde{w}}}{i,j} - \mar{\fw[G(w)]{w}}{i,j}\right|\right\} \nonumber\\
		&\le 2\max\left\{\norm[2]{X(\tilde{w})}, \norm[2]{X(w)}\right\}\C \nonumber\\
		&\le 2(B+\epsilon)\C, \nonumber
	\end{align}
	where the first inequality holds since, by the definition of $G(w)$,
	\begin{align}
		& \mar{\fw[G(\tilde{w})]{\tilde{w}}}{i,j} - \mar{\fw[G(w)]{w}}{i,j}\le \mar{\fw[G(w)]{\tilde{w}}}{i,j} - \mar{\fw[G(w)]{w}}{i,j}, \nonumber\\
		& \mar{\fw[G(w)]{w}}{i,j} - \mar{\fw[G(\tilde{w})]{\tilde{w}}}{i,j} \le \mar{\fw[G(\tilde{w})]{w}}{i,j} - \mar{\fw[G(\tilde{w})]{\tilde{w}}}{i,j}, \nonumber
	\end{align}
	and the last inequality holds from Assumption~\ref{assumption:graph} and the definition of $\epsilon$-attack.
\end{proof}

\subsubsection{Proof of Generalization bounds for MPGNN} 
\label{pf:thm-adv-gen-mpgnn}

\begin{proof}[Proof of \cref{thm:adv-gen-mpgnn}]
	We consider the cases of different values of $\tau$.
	
	\textbf{Case $\tau\ne1$}\quad Let $\beta=\max\left\{\frac{1}{\zeta},\xi^{1/l}\right\}$, where $\zeta=\min\left(\left\{\norm[2]{U_{i}}\right\}_{i=1}^{l-1},\left\{\norm[2]{W_{i}}\right\}_{i=1}^{l}\right)$ and $\xi=LM_1M_2\dfrac{\tau^{l-1}-1}{\tau-1}$.
	Consider the prior distribution $P = \N(0,\sigma^2I)$ and the random perturbation distribution $u\sim \N(0,\sigma^2I)$. Note that the $\sigma$ of the prior and the perturbation distributions are the same and will be set according to $\beta$. More precisely, we will set the $\sigma$ based on some approximation $\tilde{\beta}$ of $\beta$ since the prior distribution $P$ cannot depend on any learned weights directly. The approximation $\tilde{\beta}$ is chosen to be a cover set which covers the meaningful range of $\beta$. For now, let us assume that we have a fix $\tilde{\beta}$ and consider $\beta$ which satisfies $|\beta-\tilde{\beta}|\le\frac{1}{l+2}\beta$. Note that this also implies
	\begin{align}
		|\beta-\tilde{\beta}|\le\frac{1}{l+2}\beta &\ \Rightarrow\ \left(1-\frac{1}{l+2}\right)\beta\le \tilde{\beta}\le \left(1+\frac{1}{l+2}\right)\beta \nonumber\\
		&\ \Rightarrow\ \left(1-\frac{1}{l+2}\right)^{l+1} \beta^{l+1} \le \tilde{\beta}^{l+1} \le \left(1+\frac{1}{l+2}\right)^{l+1} \beta^{l+1} \nonumber\\
		&\ \Rightarrow\ \left(1-\frac{1}{l+2}\right)^{l+1} \beta^{l+1} \le \tilde{\beta}^{l+1} \le \left(1+\frac{1}{l+2}\right)^{l+2} \beta^{l+1} \nonumber\\
		&\ \Rightarrow\ \frac{1}{e}\beta^{l+1} \le \tilde{\beta}^{l+1} \le e\beta^{l+1}. \nonumber
	\end{align}
	Denoting $\dww{i}\in\R^{h_{i-1}\times h_{i}}, \duu{i}\in\R^{h_{0}\times h_{i}}$ as $\Delta W$, from \citep{tropp2012user}, for $\Delta W \sim\N(0,\sigma^2I)$, we have
	\begin{align}
		\Prob\left\{\norm[2]{\Delta W} \ge t\right\} \le 2h\exp\left(-\frac{t^2}{2h\sigma^2}\right). \nonumber
	\end{align}
	Taking a union bound, we have
	\begin{align}
		&\quad\, \Prob\left(\bigcap_{i=1}^{l-1}\left\{\norm[2]{\duu{i}}<t\right\}\cap\bigcap_{i=1}^{l}\left\{\norm[2]{\dww{i}}<t\right\}\right) \nonumber\\
		&= 1-\Prob\left(\bigcup_{i=1}^{l-1}\left\{\norm[2]{\duu{i}}\ge t\right\}\cup\bigcup_{i=1}^{l}\left\{\norm[2]{\dww{i}}\ge t\right\}\right) \nonumber\\
		&\ge 1 - \sum_{i=1}^{l-1}\Prob\left\{\norm[2]{\duu{i}} \ge t\right\} - \sum_{i=1}^{l}\Prob\left\{\norm[2]{\dww{i}} \ge t\right\} \nonumber\\
		&\ge 1 - 2(2l-1)h\exp\left(-\frac{t^2}{2h\sigma^2}\right). \nonumber
	\end{align}
	Setting $2(2l-1)h\exp\left(-\frac{t^2}{2h\sigma^2}\right)=\frac{1}{2}$, we have $t=\sigma\sqrt{2h\log(4(2l-1)h)}$. This implies that 
	\begin{align}
		\Prob\left(\max\left(\left\{\norm[2]{\duu{i}}\right\}_{i=1}^{l-1},\left\{\norm[2]{\dww{i}}\right\}_{i=1}^{l}\right) < t \right) \ge \frac{1}{2}, \nonumber
	\end{align}
	and thus,
	\begin{align}
		\Prob\left(\max\left(\left\{\frac{\norm[2]{\duu{i}}}{\norm[2]{U_{i}}}\right\}_{i=1}^{l-1},\left\{\frac{\norm[2]{\dww{i}}}{\norm[2]{W_{i}}}\right\}_{i=1}^{l}\right) < \dfrac{t}{\zeta} \right) \ge \frac{1}{2}. \nonumber
	\end{align}
	Plugging this bound into \cref{lem:pert-mpgnn}, we have that with probability at least $\frac{1}{2}$ over $\dw$,
	\begin{align}
		\left|\mar[R]{\fw{w\dw}}{i,j} - \mar[R]{\fw{w}}{i,j}\right| &\le 2el(B+\epsilon) L M_1 M_2 \dfrac{\tau^{l-1}-1}{\tau-1} \frac{t}{\zeta} \nonumber\\
		&\le 2el(B+\epsilon) \beta^{l+1} t \nonumber\\
		&\le 2e^{2}l(B+\epsilon) \tilde{\beta}^{l+1} \sigma\sqrt{2h\log(4(2l-1)h)} \le \frac{\gamma}{2}, \nonumber
	\end{align}
	where we set $\sigma=\frac{\gamma}{4e^{2}l(B+\epsilon) \tilde{\beta}^{l+1} \sqrt{2h\log(4(2l-1)h)}}$ to get the last inequality. Note that \cref{lem:pert-mpgnn} also requires $\forall i\in[l], \norm[2]{\dww{i}}\le \frac{1}{l}\norm[2]{W}, \norm[2]{\duu{i}}\le \frac{1}{l}\norm[2]{U}$. The requirement is satisfied if $\sigma\le \frac{1}{\zeta l\sqrt{2h\log(4(2l-1)h)}}$ which in turn can be satisfied if 
	\begin{align}\label{eq:pf-mpgnn-bound-1}
		\frac{\gamma}{4e^{2}l(B+\epsilon) \tilde{\beta}^{l+1} \sqrt{2h\log(4(2l-1)h)}} \le \frac{1}{\beta l\sqrt{2h\log(4(2l-1)h)}},
	\end{align}
	and one of the sufficient conditions for this is $\frac{\gamma}{4e(B+\epsilon)} \le \beta^{l}$. We will see how to satisfy this condition later.
	
	\noindent We now compute the KL term in the PAC-Bayes bound in \cref{lem:pert-kl-bound}. As defined above, for a fixed $w$, the prior distribution $P=\N(0,\sigma^2I)$, the posterior distribution $Q$ is the distribution of $\dw$ shifted by $w$, that is, $Q=\N(w,\sigma^2I)$. Then we have,
	\begin{align}
		\kl{Q} = \frac{\norm[2]{w}^2}{2\sigma^2} &= \frac{16e^{4}l^{2}(B+\epsilon)^{2} \tilde{\beta}^{2l+2} 2h\log(4(2l-1)h)}{2\gamma^2} \left(\sum_{i=1}^{l-1}\norm[F]{U_{i}}^2+\sum_{i=1}^{l}\norm[F]{W_{i}}^2\right) \nonumber\\
		&\le \cO\left(\frac{(B+\epsilon)^{2} \beta^{2l+2} l^{2}h\log(lh)}{\gamma^2} \left(\sum_{i=1}^{l-1}\norm[F]{U_{i}}^2+\sum_{i=1}^{l}\norm[F]{W_{i}}^2\right)\right). \nonumber
	\end{align}
	From \cref{lem:pert-kl-bound}, fixing any $\tilde{\beta}$, with probability $1-\delta$ and for all $w$ such that $|\beta-\tilde{\beta}|\le\frac{1}{l+2}\beta$, we have,
	\begin{align}\label{eq:pf-mpgnn-bound-2}
		\ml{R}{0} \le \eml{R}{\gamma} + \cO\left(\sqrt{\frac{(B+\epsilon)^2 \beta^{2l+2} l^2 h \log(lh)\norm[2]{w}^2 + \log(m/\delta)}{\gamma^2 m}}\right).
	\end{align}
	\hspace*{\fill}\\
	\noindent Finally, we need to consider multiple choices of $\tilde{\beta}$ so that for any $\beta$, we can bound the generalization error like \cref{eq:pf-mpgnn-bound-2}. First, we only need to consider values of $\beta$ in the following range,
	\begin{align}\label{eq:pf-mpgnn-bound-3}
		\left(\frac{\gamma}{2(B+\epsilon)}\right)^{1/l} \le \beta \le \left(\frac{\gamma\sqrt{m}}{2(B+\epsilon)}\right)^{1/l},
	\end{align}
	since otherwise the bound holds trivially as $\ml{R}{0}\le 1$ by definition. To see this, if $\beta^{l}<\frac{\gamma}{2(B+\epsilon)}$, then for any $G$ and any $i, j\in[K]$, we have,
	\begin{align}
		\left|\mar[RM]{\fw{w}}{i,j}\right| &= \left|\inf_{G'\in\delta_{w}(G)}\mar{\fw[G']{w}}{i,j}\right| = \left|\mar{\fw[\hat{G}]{w}}{i,j}\right| \nonumber\\
		&= \left|\fw[\hat{G}]{w}_{i} -\fw[\hat{G}]{w}_{j}\right| \le 2\left|\fw[\hat{G}]{w}_{i}\right| \nonumber\\
		&\le 2\norm[2]{\fw[\hat{G}]{w}} = 2\norm[2]{\frac{1}{n}\bm{1}_n H_{l-1}^*W_l} \le 2\norm[2]{\frac{1}{n}\bm{1}_n} \norm[F]{H_{l-1}^*W_l} \nonumber\\
		&\le \frac{2}{\sqrt{n}} \norm[F]{H_{l-1}^*}\norm[2]{W_l} \le 2(B+\epsilon) LM_1 M_2 \dfrac{\tau^{l-1}-1}{\tau-1} \nonumber\\
		&\le 2(B+\epsilon) \beta^{l} \le 2(B+\epsilon) \frac{\gamma}{2(B+\epsilon)} = \gamma, \nonumber 
	\end{align}
	where $\hat{G}=\arg\inf_{G'\in\delta_{w}(G)}\mar{\fw[G']{w}}{i,j}$ and $H_{l-1}^*$ is the corresponding node representation.
	Therefore, by the definition, we always have $\eml{L}{\gamma}=1$ when $\beta^{l}<\frac{\gamma}{2(B+\epsilon)}$.
	
	\noindent Alternatively, if $\beta^{l}>\frac{\gamma\sqrt{m}}{2(B+\epsilon)}$, the term inside the big-O notation in \cref{eq:pf-mpgnn-bound-2} would be
	\begin{align}
		\sqrt{\frac{(B+\epsilon)^2 \beta^{2l+2} l^2 h \log(lh)\norm[2]{w}^2 + \log(m/\delta)}{\gamma^2 m}} &\ge \sqrt{\frac{l^2 h \log(lh) \left(\norm[2]{w}^2/\zeta^{2}\right) + \log(m/\delta)}{4}} \ge 1, \nonumber
	\end{align}	
	where the first inequality holds since $\beta^{2}\ge\zeta^{2}$ and the last inequality holds since 
	\begin{align}
		\norm[2]{w}^2\ge \min\left(\left\{\norm[F]{U_{i}}^{2}\right\}_{i=1}^{l-1}, \left\{\norm[F]{W_{i}}^{2}\right\}_{i=1}^{l}\right) \ge \zeta^{2}, \nonumber
	\end{align}
	and we typically choose $h\ge2$ in practice and $l\ge2$.
	\hspace*{\fill}\\
	\noindent And note that the lower bound in \cref{eq:pf-mpgnn-bound-3} ensures that \cref{eq:pf-mpgnn-bound-1} holds which in turn justifies the applicability of \cref{lem:pert-mpgnn}. 
	
	Consider a covering set centered at $C=\left\{\tilde{\beta}_1,\dots,\tilde{\beta}_{|C|}\right\}$ of the interval in \cref{eq:pf-mpgnn-bound-3} with radius $\frac{1}{l+2}\left(\frac{\gamma}{2(B+\epsilon)}\right)^{1/l}$, where $|C|$ is the size of the set. To make sure that $|\beta-\tilde{\beta}|\le\frac{1}{l+2}\left(\frac{\gamma}{2(B+\epsilon)}\right)^{1/l}\le\frac{1}{l+2}\beta$, the minimal size of $C$ should be
	\begin{align}
		|C| = \frac{\left(\frac{\gamma\sqrt{m}}{2(B+\epsilon) }\right)^{1/l} - \left(\frac{\gamma}{2(B+\epsilon)}\right)^{1/l}}{\frac{2}{l+2}\left(\frac{\gamma}{2(B+\epsilon)}\right)^{1/l}} = \frac{l+2}{2}\left(m^{\frac{1}{2l}}-1\right). \nonumber
	\end{align}
	
	For a $\tilde{\beta}_i\in C$, we denote the event that \cref{eq:pf-mpgnn-bound-2} holds for $|\beta-\tilde{\beta}|\le\frac{1}{l}\beta$ as $E_i$. Therefore, the probability of the event that \cref{eq:pf-mpgnn-bound-2} holds for any $\beta$ satisfying \cref{eq:pf-mpgnn-bound-3} is
	\begin{align}
		\Prob\left(\bigcap_{i=1}^{|C|}E_i\right) = 1-\Prob\left(\bigcup_{i=1}^{|C|}\bar{E}_i\right) \ge 1-\sum_{i=1}^{|C|}\Prob\left\{\bar{E}_i\right\} \ge 1-|C|\delta, \nonumber
	\end{align}
	where $\bar{E}_i$ denotes the complement of $E_i$. Hence, let $\delta'=|C|\delta$, then we have that, with probability $1-\delta'$ and for all $w$,
	\begin{align}
		\ml{R}{0} &\le \eml{R}{\gamma} + \cO\left(\sqrt{\frac{(B+\epsilon)^2 \beta^{2l+2} l^2 h \log(lh)\norm[2]{w}^2 + \log(m/\delta)}{\gamma^2 m}}\right) \nonumber\\
		&\le \eml{R}{\gamma} \nonumber\\
		&\quad \cO\left(\sqrt{\frac{(B+\epsilon)^2 \max\left\{\zeta\inv, \xi^{1/l}\right\}^{2l+2} l^2 h \log(lh)\norm[2]{w}^2 + \log(m(l+2)/\delta')}{\gamma^2 m}}\right), \nonumber
	\end{align}
	where $\zeta=\min\left(\left\{\norm[2]{U_{i}}\right\}_{i=1}^{l-1},\left\{\norm[2]{W_{i}}\right\}_{i=1}^{l}\right)$, $\xi=LM_1M_2\dfrac{\tau^{l-1}-1}{\tau-1}$.

	\hspace*{\fill}\\
	\noindent \textbf{Case $\tau=1$}\quad We now prove the case of $\tau=1$, which is similar to the case of $\tau\ne1$. Note that this case happens rarely in practice. We only include it for the completeness of the analysis. Again let $\beta=\max\left\{\frac{1}{\zeta}, \sqrt{LM_1M_2}\right\}$. For now, let us assume that we have a fix $\tilde{\beta}$ and consider $\beta$ which satisfies $|\beta-\tilde{\beta}|\le\frac{1}{4}\beta$. Note that this also implies $\frac{1}{e}\beta^{3}\le \tilde{\beta}^{3}\le e\beta^{3}$. Based on \cref{lem:pert-mpgnn}, we have with probability at least $\frac{1}{2}$ over $\dw$,
	\begin{align}
		\left|\mar[R]{\fw{w+\dw}}{i,j} - \mar[R]{\fw{w}}{i,j}\right| &\le 2 e(l+1)^2(B+\epsilon) L M_1 M_2 \frac{t}{\zeta}  \nonumber\\
		&\le 2 e(l+1)^2(B+\epsilon) \beta^{3} t \nonumber\\
		&\le 2 e^2(l+1)^2(B+\epsilon) \tilde{\beta}^{3} \sigma \sqrt{2h\log(4(2l-1)h)} \le \frac{\gamma}{2}, \nonumber
	\end{align}
	where we set $\sigma=\frac{\gamma}{4e^{2}(l+1)^2(B+\epsilon) \tilde{\beta}^{3} \sqrt{2h\log(4(2l-1)h)}}$ to get the last inequality. Note that \cref{lem:pert-mpgnn} also requires $\forall i\in[l], \norm[2]{U_i}\le \frac{1}{l}\norm[2]{W_i}$. The requirement is satisfied if $\sigma\le \frac{1}{\zeta l\sqrt{2h\log(4(2l-1)h)}}$ which in turn can be satisfied if 
	\begin{align}\label{eq:pf-mpgnn-bound-4}
		\frac{\gamma}{4e^{2}(l+1)^2(B+\epsilon) \tilde{\beta}^{3} \sqrt{2h\log(4(2l-1)h)}} \le \frac{1}{\beta l\sqrt{2h\log(4(2l-1)h)}}, 
	\end{align}
	and one of the sufficient conditions is $\frac{l\gamma}{4e(B+\epsilon)(l+1)^2} \le \beta^{2}$. We will see how to satisfy this condition later.
	
	\noindent We now compute the KL term in the PAC-Bayes bound in \cref{lem:pert-kl-bound}. As defined above, for a fixed $w$, the prior distribution $P=\N(0,\sigma^2I)$, the posterior distribution $Q$ is the distribution of $\dw$ shifted by $w$, that is, $Q=\N(w,\sigma^2I)$. Then we have,
	\begin{align}
		\kl{Q} = \frac{\norm[2]{w}^2}{2\sigma^2} &= \frac{16e^{4}(l+1)^{4}(B+\epsilon)^{2} \tilde{\beta}^{6} 2h\log(4(2l-1)h)}{2\gamma^2} \left(\sum_{i=1}^{l-1}\norm[F]{U_{i}}^2+\sum_{i=1}^{l}\norm[F]{W_{i}}^2\right) \nonumber\\
		&\le \cO\left(\frac{(B+\epsilon)^{2} \beta^{6} (l+1)^{4}h\log(lh)}{\gamma^2} \left(\sum_{i=1}^{l-1}\norm[F]{U_{i}}^2+\sum_{i=1}^{l}\norm[F]{W_{i}}^2\right)\right). \nonumber
	\end{align}
	From \cref{lem:pert-kl-bound}, fixing any $\tilde{\beta}$, with probability $1-\delta$ and for all $w$ such that $|\beta-\tilde{\beta}|\le\frac{1}{4}\beta$, we have,
	\begin{align}\label{eq:pf-mpgnn-bound-5}
		\ml{R}{0} \le \eml{R}{\gamma} + \cO\left(\sqrt{\frac{(B+\epsilon)^2 \beta^{6} (l+1)^4 h \log(lh)\norm[2]{w}^2 + \log(m/\delta)}{\gamma^2 m}}\right).
	\end{align}
	\hspace*{\fill}\\
	\noindent Finally, we need to consider multiple choices of $\tilde{\beta}$ so that for any $\beta$, we can bound the generalization error like \cref{eq:pf-mpgnn-bound-5}. First, we only need to consider values of $\beta$ in the following range,
	\begin{align}\label{eq:pf-mpgnn-bound-6}
		\sqrt{\frac{\gamma}{2(B+\epsilon)l}} \le \beta \le \sqrt{\frac{\gamma\sqrt{m}}{2(B+\epsilon)l}},
	\end{align}
	since otherwise the bound holds trivially as $\ml{R}{0}\le 1$ by definition. To see this, if $\beta^{2}<\frac{\gamma}{2(B+\epsilon)l}$, then for any $(X,A)$ and any $i, j\in[K]$, we have,
	\begin{align}
		\left|\mar[RM]{\fw{w}}{i,j}\right| &= \left|\inf_{G'\in\delta_{w}(G)}\mar{\fw[G']{w}}{i,j}\right| = \left|\mar{\fw[\hat{G}]{w}}{i,j}\right| \nonumber\\
		&= \left|\fw[\hat{G}]{w}_{i} -\fw[\hat{G}]{w}_{j}\right| \le 2\left|\fw[\hat{G}]{w}_{i}\right| \nonumber\\
		&\le 2\norm[2]{\fw[\hat{G}]{w}} = 2\norm[2]{\frac{1}{n}\bm{1}_n H_{l-1}^*W_l} \le 2\norm[2]{\frac{1}{n}\bm{1}_n} \norm[F]{H_{l-1}^*W_l} \nonumber\\
		&\le \frac{2}{\sqrt{n}} \norm[F]{H_{l-1}^*}\norm[2]{W_l} \le 2(B+\epsilon) (l-1) L M_1 M_2 \le 2(B+\epsilon) (l-1) \beta^{2} \nonumber\\
		&\le 2(B+\epsilon)(l-1) \frac{\gamma}{2(B+\epsilon)l} \leq \gamma, \nonumber
	\end{align}
	where $\hat{G}=\arg\inf_{G'\in\delta_{w}(G)}\mar{\fw[G']{w}}{i,j}$ and $H_{l-1}^*$ is the corresponding node representation.
	Therefore, by the definition, we always have $\eml{R}{\gamma}=1$ when $\beta^{2}<\frac{\gamma}{2(B+\epsilon)l}$.
	
	\noindent Alternatively, if $\beta^{2}>\frac{\gamma\sqrt{m}}{2(B+\epsilon)l}$, the term inside the big-O notation in \cref{eq:pf-mpgnn-bound-5} would be
	\begin{align}
		\sqrt{\frac{(B+\epsilon)^2 \beta^{6} (l+1)^4 h \log(lh)\norm[2]{w}^2 + \log(m/\delta)}{\gamma^2 m}} &\ge \sqrt{\frac{(l+1)^4 h \log(lh) \beta^{2}\norm[2]{w}^2 + \log(m/\delta)}{4l^2}} \ge 1, \nonumber
	\end{align}	
	where  the last inequality holds since 
	\begin{align}
		\norm[2]{w}^2\ge \min\left(\left\{\norm[F]{U_{i}}^{2}\right\}_{i=1}^{l-1}, \left\{\norm[F]{W_{i}}^{2}\right\}_{i=1}^{l}\right) \ge \zeta^{2}, \nonumber
	\end{align}
	and we typically choose $h\ge2$ in practice and $l\ge2$.
	\hspace*{\fill}\\
	\noindent And note that the lower bound in \cref{eq:pf-mpgnn-bound-6} ensures that \cref{eq:pf-mpgnn-bound-4} holds which in turn justifies the applicability of \cref{lem:pert-mpgnn}. 
	
	Consider a covering set $C=\left\{\tilde{\beta}_1,\dots,\tilde{\beta}_{|C|}\right\}$ of the interval in \cref{eq:pf-mpgnn-bound-6} with radius $\frac{1}{4}\sqrt{\frac{\gamma}{2(B+\epsilon)l}}$, where $|C|$ is the size of the set. To make sure that $|\beta-\tilde{\beta}|\le\frac{1}{4}\sqrt{\frac{\gamma}{2(B+\epsilon)l}}\le\frac{1}{4}\beta$, the minimal size of $C$ should be
	\begin{align}
		|C| = \frac{\sqrt{\frac{\gamma\sqrt{m}}{2(B+\epsilon)l}}-\sqrt{\frac{\gamma}{2(B+\epsilon)l}}}{\frac{2}{4}\sqrt{\frac{\gamma}{2(B+\epsilon)l}}} = 2\left(m^{\frac{1}{4}}-1\right). \nonumber
	\end{align}
	
	For a $\tilde{\beta}_i\in C$, we denote the event that \cref{eq:pf-mpgnn-bound-5} holds for $|\beta-\tilde{\beta}|\le\frac{1}{l}\beta$ as $E_i$. Therefore, the probability of the event that \cref{eq:pf-mpgnn-bound-5} holds for any $\beta$ satisfying \cref{eq:pf-mpgnn-bound-6} is
	\begin{align}
		\Prob\left(\bigcap_{i=1}^{|C|}E_i\right) = 1-\Prob\left(\bigcup_{i=1}^{|C|}\bar{E}_i\right) \ge 1-\sum_{i=1}^{|C|}\Prob\left\{\bar{E}_i\right\} \ge 1-|C|\delta, \nonumber
	\end{align}
	where $\bar{E}_i$ denotes the complement of $E_i$. Hence, then we have that, with probability $1-\delta$ and for all $w$,
	\begin{align}
		\ml{L}{0} &\le \eml{L}{\gamma} + \cO\left(\sqrt{\frac{(B+\epsilon)^2 \beta^{6} (l+1)^4 h \log(lh)\norm[2]{w}^2 + \log(m|C|/\delta)}{\gamma^2 m}}\right) \nonumber\\
		&\le \eml{L}{\gamma} \nonumber\\
		&\quad + \cO\left(\sqrt{\frac{(B+\epsilon)^2 \max\left\{\zeta^{-6}, \left(LM_1M_2\right)^{3}\right\} (l+1)^4 h \log(lh)\norm[2]{w}^2 + \log(m/\delta)}{\gamma^2 m}}\right). \nonumber
	\end{align}
\end{proof}

\bibliographystyle{elsarticle-harv} 
\bibliography{ref}



%
%
%
\end{document}